%% file: neurips_2026.tex
\documentclass{article}

    \PassOptionsToPackage{authoryear, round}{natbib}
 \usepackage[preprint]{neurips_2026}

\usepackage[utf8]{inputenc} 
\usepackage[T1]{fontenc}    
\usepackage{hyperref}       
\usepackage{url}            
\usepackage{booktabs}       
\usepackage{amsfonts}       
\usepackage{nicefrac}       
\usepackage{microtype}      
\usepackage{xcolor}         
\usepackage{graphicx}
\usepackage{amsmath}
\usepackage{amssymb}
\usepackage{mathtools}
\usepackage{amsthm}
\usepackage{tabularx}
\usepackage{multirow}

\theoremstyle{plain}
\newtheorem{theorem}{Theorem}[section]
\newtheorem{proposition}[theorem]{Proposition}
\newtheorem{lemma}[theorem]{Lemma}
\newtheorem{corollary}[theorem]{Corollary}
\theoremstyle{definition}

\theoremstyle{remark}

\newcommand{\fm}{\text{FM}}
\newcommand{\cfm}{\text{CFM}}
\newcommand{\vfm}{\text{VFM}}
\newcommand{\moefm}{\text{MoE-FM}}

\newcommand{\nar}{\text{NAR}}
\newcommand{\ar}{\text{AR}}
\newcommand{\yan}{\textsc{YAN}}
\newcommand{\data}{\text{data}}
\newcommand{\lx}{{L'}}
\newcommand{\ly}{{L}}
\newcommand{\vocab}{\mathcal{V}}
\newcommand{\src}{\text{src}}
\newcommand{\tgt}{\text{tgt}}
\newcommand{\ce}{\text{CE}}
\newcommand{\mmd}{\text{MMD}}
\newcommand{\scale}{\text{Scale}}
\newcommand{\kl}{\text{KL}}
\newcommand{\llada}{\text{LLaDA}}
\newcommand{\diffullama}{\text{DiffuLLaMA}}
\usepackage{bbm}

\title{Towards Faster Language Model Inference Using Mixture-of-Experts Flow Matching}

\author{%
  Aihua Li \\
  Duke University\\
  \texttt{aihua.li@duke.edu} \\
}

\begin{document}

\maketitle

\begin{abstract}
Flow matching retains the generation quality of diffusion models while enabling substantially faster inference, making it a compelling paradigm for generative modeling. However, when applied to language modeling, it exhibits fundamental limitations in representing complex latent distributions with irregular geometries, such as anisotropy and multimodality. To address these challenges, we propose a mixture-of-experts flow matching (MoE-FM) framework, which captures complex global transport geometries in latent space by decomposing them into locally specialized vector fields. Building on MoE-FM, we develop a non-autoregressive (NAR) language modeling approach, named YAN, instantiated with both Transformer and Mamba architectures. Across multiple downstream tasks, YAN achieves generation quality on par with both autoregressive (AR) and diffusion-based NAR language models, while requiring as few as three sampling steps. This yields a $40\times$ speedup over AR baselines and up to a $10^3\times$ speedup over diffusion language models, demonstrating substantial efficiency advantages for language modeling. 
\end{abstract}

\section{Introduction}

Along with the remarkable success of autoregressive (\ar) large language models in generating high-quality text \citep{gpt2_2019, llama2024, qwen_technical_report, deepseek2024}, their inference latency has long been a subject of concern. \ar\ models generate tokens sequentially in a left-to-right manner, requiring one forward pass per token generation \citep{nar_nmt_gu2018, mask_predict_2019, fast_decoding_using_discrete_latent2018}. Aiming to parallelize decoding and speed up inference, diffusion-based non-autoregressive (\nar) language models have emerged as a popular alternative in recent years, inspired by the remarkable performance of diffusion models in computer vision \citep{ddpm_ho2020, diffusion_beats_gan2021}. However, in the context of language generation, these methods still exhibit a fundamental performance gap compared to well-established \ar\ models at comparable scales \citep{on_the_learning2022, benefit_limitation_diffusionlm2025, tricks_of_trade2021, enriching_nar2021}. In practice, to achieve competitive quality, existing diffusion methods typically require hundreds or thousands of inference steps to iteratively refine generated tokens \citep{benefit_limitation_diffusionlm2025}. This, in turn, offsets the theoretical efficiency benefits of parallel decoding. 

To improve the quality-efficiency trade-off of current \nar\ language models, we investigate flow matching, a generative modeling paradigm that has demonstrated substantial efficiency advantages \citep{lipman_flow_2022, lipman_flow_guide, flow_straight_fast, consistency_fm2024}. Flow matching generates samples by integrating a deterministic ordinary differential equation (ODE), which can be trained to follow relatively straight trajectories, thereby bypassing the iterative denoising procedures of diffusion models that contribute to high inference latency. Beyond efficiency, flow matching has also achieved strong generation quality in image and video synthesis \citep{scaling_rf_2024, flow_matching_video2023}. Despite this promise, its application to language modeling remains largely unexplored. 

However, when instantiated for language modeling, we identify a fundamental limitation of flow matching in modeling complex latent distributions. Prior studies have shown that text representations exhibit highly irregular geometries, including multimodality, anisotropy, and fragmented manifolds \citep{isotropic_cluster_and_manifold2021, cone_gao2019, how_contextual2019, cluster_approach_improve_isotropy2021, fine_tuning_geometry2021}. Under such conditions, vanilla flow matching with a single global vector field proves insufficient to faithfully capture the underlying transport structure, particularly under limited training scales and a small number of sampling steps. 

To enhance the representational capacity of flow-based language models, we propose \textbf{mixture-of-experts flow matching} (\moefm; Fig.~\ref{fig:moe}). \moefm\ models the conditional target vector field through a mixture-of-experts formulation, where multiple expert vector fields are combined via data-dependent soft routing. This approach effectively decomposes global transport and encourages specialization in distinct local transport geometries. Building on this enhanced flow matching formulation, we introduce a new \nar\ language modeling paradigm, which leverages \moefm\ in the latent space, aiming to learn token representations that are sufficiently expressive to support efficient sequence decoding with few parallelizable layers. We refer to the proposed model as \textbf{YAN}---\textit{Flow Until \textbf{Y}ou \textbf{A}lmost K\textbf{n}ow} (Fig.~\ref{fig:yan_overview}). Our main contributions are summarized as follows: 
\begin{itemize}
    \item We improve the representational fidelity and sampling efficiency of flow matching for language modeling by developing mixture-of-experts flow matching (Fig.~\ref{fig:moe}). This leads to higher-quality recovery of text latent representations compared to vanilla flow matching, and induces straighter ODE trajectories that allow accurate generation with fewer integration steps.
    \item We introduce \yan, a non-autoregressive language modeling framework that employs latent mixture-of-experts flow matching (Fig.~\ref{fig:yan_overview}). We instantiate \yan\ with both Transformer \citep{transformer2017} and Mamba \citep{mamba_gu2024} architectures, and train models at the 200M-parameter scale.
    \item We evaluate \yan\ across a range of downstream language modeling tasks, including text infilling and completion, and show that it achieves improved generation quality relative to baseline methods (Tab.~\ref{tab:main_results}). 
    \item We analyze the inference efficiency of \yan\ and show that it achieves high-quality long-document infilling with as few as three Euler sampling steps. This results in a $40\text{--}50\times$ speedup over autoregressive baselines and an order-of-magnitude $10^3\times$ speedup over diffusion-based language models (Fig.~\ref{fig:efficiency_analysis}). 
\end{itemize}

\begin{figure*}[t]
\vskip -0.05in
  \begin{center}
    \includegraphics[width=\textwidth]{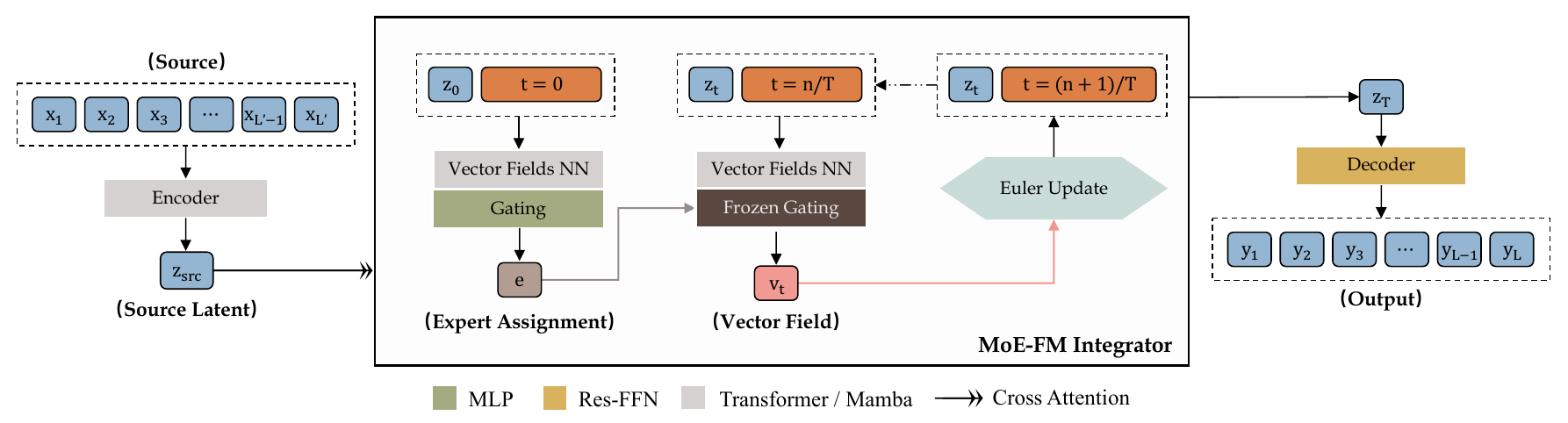}
    \vskip -0.05in
    \caption{Overview of the \yan\ non-autoregressive language model.}
    \label{fig:yan_overview}
  \end{center}
\end{figure*}

\section{Background and Problem Statement}

\begin{figure*}[t]
  \begin{center}
    \centerline{\includegraphics[width=\textwidth]{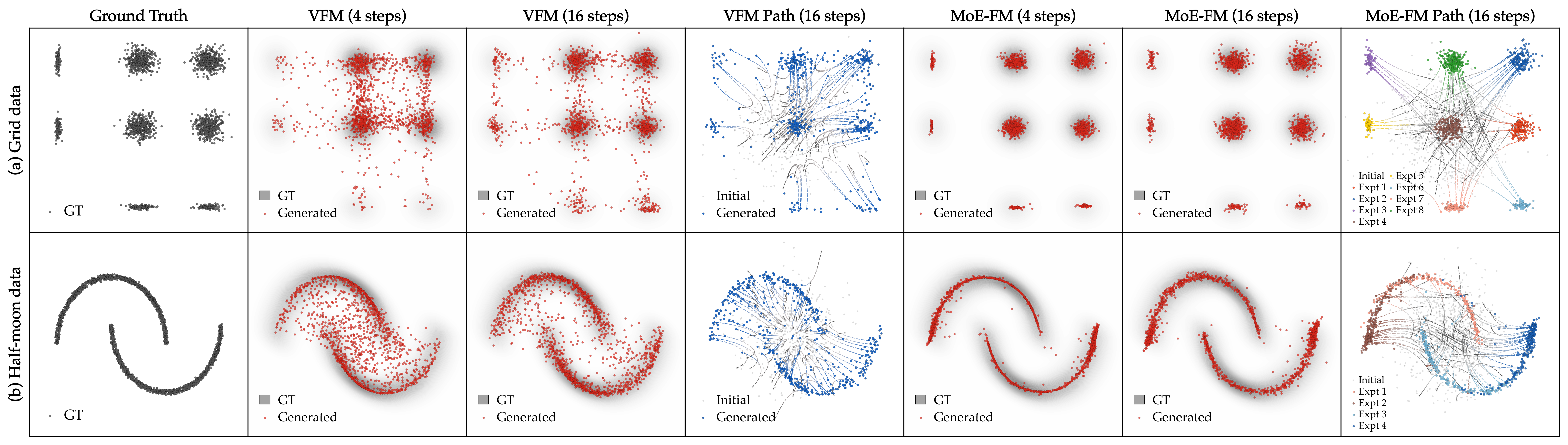}}
    \vskip -0.05in
    \caption{Comparison of vanilla flow matching (\vfm) and mixture-of-experts flow matching (\moefm) on synthetic datasets. Results on (a) grid and (b) half-moon data show that \moefm\ more accurately recovers data distributions with irregular geometries, including disconnected, curved modes and nonlinear low-dimensional manifolds. Moreover, \moefm\ learns straighter transport trajectories from noise to data, thereby improving generative performance with fewer sampling steps.}
    \label{fig:moe}
  \end{center}
\end{figure*}

\subsection{Flow Matching}

Flow matching (\fm) \citep{lipman_flow_2022, lipman_flow_guide, introduction_flow_mit} is a class of generative models that synthesizes samples from a target distribution $p_1$ by learning a time-dependent vector field that transports an initial distribution $p_0$ to $p_1$. Formally, let $u:\mathbb{R}^m\times [0,1]\to \mathbb{R}^m,\,(z_t,t)\mapsto u_t(z_t)$ denote a vector field over $\mathbb{R}^m$ defining the ordinary differential equation (ODE)
\begin{equation*}
    dz_t=u_t(z_t)dt,
\end{equation*}
whose solution is a trajectory $\{z_t\}_{0\le t\le 1}\subset \mathbb{R}^m$. The goal of \fm\ is to learn a target vector field $u^*$ such that, if the initial state satisfies $z_0\sim p_0$, then the terminal state satisfies $z_1\sim p_1$, with intermediate states $z_t$ following a prescribed probability path $\{p_t\}_{0\le t\le 1}$, i.e., $z_t\sim p_t$ for all $t\in(0,1)$. 

Inspired by the forward process of denoising diffusion models \citep{ddpm_ho2020, ddim_song2021, introduction_flow_mit}, \fm\ adopts the Gaussian conditional probability path defined as $p_{t\mid 1}(z\mid z_1)=\mathcal{N}(z;tz_1, (1-t)^2 I)$ with $z_0\sim \mathcal{N}(0,I)$. This is equivalent to imposing a linear interpolation $z_t=tz_1+(1-t)z_0$, yielding the target conditional vector field $u_t^*(z_t\mid z_1)=(z_1-z_t)/(1-t)$. \fm\ learns this target conditional vector field by regressing it with a parameterized vector field $u_t^\psi$ under an $\ell_2$ loss 
\begin{equation}\label{eq:cfm loss}
    \mathcal{L}_\cfm(\psi)=\mathbb{E}_{t,z_1,z_t\mid z_1}\|u_t^\psi(z_t)-u_t^*(z_t\mid z_1)\|^2,
\end{equation}
where the expectation is taken over $t\sim \mathcal{U}(0,1)$, $z_1\sim p_1$, and the intermediate conditional distribution $z_t\mid z_1\sim p_{t\mid 1}$. The validity of this optimization follows from the fact that the conditional objective~\eqref{eq:cfm loss} is equivalent, up to a constant independent of the parameter $\psi$, to the marginal objective $\mathcal{L}_\fm(\psi)=\mathbb{E}_{t,z_t}\|u_t^\psi(z_t)-u_t^*(z_t)\|^2$, where $t\sim \mathcal{U}(0,1)$ and $z_t\sim p_t$. In other words, optimizing the conditional objective~\eqref{eq:cfm loss} equivalently recovers the target marginal vector field \citep{lipman_flow_2022}. Moreover, the Gaussian conditional formulation is commonly used in practice, as it leads to the simplified objective
\begin{equation}\label{eq:vfm loss}
    \mathcal{L}_\vfm(\psi)=\mathbb{E}_{t,z_1,z_t\mid z_1}\|u_t^\psi(z_t)-(z_1-z_0)\|^2,
\end{equation}
which can be efficiently estimated via Monte Carlo sampling. As indicated by Eq.~\eqref{eq:vfm loss}, under Gaussian linear interpolation, the target conditional vector field induces straight-line trajectories connecting the initial noise $z_0$ to the target endpoint $z_1$. Sampling along such deterministic ODE trajectories enables faster inference than diffusion models, which iteratively denoise the samples along stochastic trajectories \citep{lipman_flow_2022, flow_straight_fast, scorebased_song2021}.

\subsection{Non-Autoregressive Language Modeling}

The language modeling objective of this work is formulated as a standard sequence-to-sequence task, which aims to learn the distribution of a target token sequence $y=(y_1, \dots, y_\ly)$ given a source sequence $x=(x_1, \dots, x_\lx)$, with each token drawn from a vocabulary $\vocab=\{1, \dots, V\}$. Motivated by the efficiency advantages of flow matching, we aim to leverage flow matching to push the efficiency limits of current \nar\ language models \citep{llada2025, mdlm2024, diffulamma2025, dream2025}. We adopt a latent variable \nar\ formulation
\begin{equation}\label{eq:latent variable model}
    p(y\mid x)=\int p(y\mid z,x)p(z\mid x)dz
\end{equation}
with a latent variable $z\in\mathbb{R}^m$ \citep{delta_posterior2020, nar_nmt_gu2018, tricks_of_trade2021, seqdiffuseq2022}, and apply continuous flow matching in the latent space. Compared to discrete flow matching \citep{discrete_flow_matching2024}, the continuous approach typically exhibits better optimization stability and has shown higher quality in prior work \citep{alpha_flow2025}.

\section{Flow Matching for Language Modeling}\label{sec:flow_matching}

We begin by showing the limitations of vanilla flow matching when applied to latent language modeling (Sec.~\ref{sec:limitations_of_vfm}), and then develop a mixture-of-experts approach (Sec.~\ref{sec:moe_fm}).

\subsection{Limitations of Vanilla Flow Matching}\label{sec:limitations_of_vfm}

In our preliminary experiments, the vanilla flow matching (\vfm) trained with objective~\eqref{eq:vfm loss} underperforms in learning the token latent distribution introduced in Sec.~\ref{sec:nar_lm_via_yan}, particularly under finite training scale. Intuitively, this distribution is highly anisotropic, concentrated on a degenerate manifold, and exhibits isolated modes and clustering effects \citep{isotropic_cluster_and_manifold2021, cone_gao2019, how_contextual2019, cluster_approach_improve_isotropy2021, fine_tuning_geometry2021}. These geometries pose a challenging setting for \vfm. To illustrate this issue, Fig.~\ref{fig:moe} presents two examples---multimodal grid data and half-moon data---demonstrating the performance of \vfm\ under irregular geometries involving disconnected, curved modes and nonlinear low-dimensional manifolds. Samples are generated using an Euler ODE solver. As shown, \vfm\ produces samples that are blurred across modes, and this further degrades as the number of sampling steps decreases. Moreover, the learned trajectories are highly curved, despite the target vector field being designed to be straight. Similar issues have been reported in recent works \citep{efficient_fm2025, variational_fm2025}.

From a theoretical standpoint, this limitation can be attributed to the fact that \vfm\ approximates the true distribution of the target vector field $q_\data(u^*\mid z_t,t)$ with a Gaussian distribution, $q_\psi^\vfm(u^*\mid z_t,t)=\mathcal{N}(u^*;u_t^\psi(z_t),I)$, by minimizing the Kullback-Leibler (\kl) divergence \citep{kl1951}
\begin{equation*}
    \kl\big(q_\data\|q_\psi^\vfm\big)=\mathbb{E}_{u^*\sim q_\data(u^*\mid z_t,t)}\big[\log q_\data(u^*\mid z_t,t)-\log q_\psi^\vfm(u^*\mid z_t,t)\big].
\end{equation*}
Hereafter, for notational simplicity, $u^*=z_1-z_0$ denotes the sample-level conditional target vector field in Eq.~\eqref{eq:vfm loss}. This objective is equivalent to minimizing the $\ell_2$ objective~\eqref{eq:vfm loss}, whose solution is characterized by the following proposition:
\begin{proposition}\label{proposition}
$\hat{u}^\vfm(z_t,t)=\mathbb{E}[z_1-z_0\mid z_t, t]$ is the conditional minimizer of the \vfm\ objective~\eqref{eq:vfm loss}.
\end{proposition}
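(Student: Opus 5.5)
The plan is the standard bias--variance (orthogonal projection) argument for squared loss, carried out pointwise in $(z_t,t)$. By the tower property, I rewrite the objective~\eqref{eq:vfm loss} as an iterated expectation: the outer expectation is over $t\sim\mathcal{U}(0,1)$ and $z_t\sim p_t$, and the inner one is over the conditional law of $u^*=z_1-z_0$ given $(z_t,t)$. This gives
\begin{equation*}
\mathcal{L}_\vfm(\psi)=\mathbb{E}_{t,z_t}\Big[\mathbb{E}\big[\|u_t^\psi(z_t)-u^*\|^2\,\big|\,z_t,t\big]\Big].
\end{equation*}
This step needs only that $(t,z_0,z_1)$ has a joint law with $z_t=tz_1+(1-t)z_0$ a measurable function of it. I also need $\mathbb{E}\|z_1\|^2<\infty$, which is automatic for $z_0$ since it is Gaussian. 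Under these conditions $u^*$ is square integrable and the conditional expectation $\hat u(z_t,t):=\mathbb{E}[u^*\mid z_t,t]$ is well defined.

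Next, for a fixed $(z_t,t)$, I treat $v=u_t^\psi(z_t)$ as a deterministic vector, since it is measurable with respect to the conditioning variables. I add and subtract $\hat u$ and expand:
\begin{equation*}
\mathbb{E}\big[\|v-u^*\|^2\mid z_t,t\big]=\|v-\hat u\|^2+2\big\langle v-\hat u,\;\mathbb{E}[\hat u-u^*\mid z_t,t]\big\rangle+\mathbb{E}\big[\|\hat u-u^*\|^2\mid z_t,t\big].
\end{equation*}
The cross term vanishes because $\mathbb{E}[u^*\mid z_t,t]=\hat u$ by definition. The last term is the conditional variance, which does not depend on $\psi$. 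So the inner expectation equals $\|v-\hat u\|^2$ plus a constant. It is therefore minimized uniquely at $v=\hat u(z_t,t)$, and this holds for every $(z_t,t)$.

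Finally, I integrate over $(z_t,t)$. This gives $\mathcal{L}_\vfm(\psi)=\mathbb{E}_{t,z_t}\|u_t^\psi(z_t)-\hat u(z_t,t)\|^2+C$, where $C$ does not depend on $\psi$. Hence over all measurable vector fields the objective is minimized exactly when $u_t^\psi(z_t)=\mathbb{E}[z_1-z_0\mid z_t,t]$ almost everywhere, which is the claim. As a side remark, this matches the KL interpretation stated just before the proposition. Maximizing the Gaussian log-likelihood $\log\mathcal{N}(u^*;u_t^\psi(z_t),I)$ amounts to the same $\ell_2$ problem, whose optimum is the conditional mean.

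The main obstacle is the justification step rather than the algebra. I need to state the integrability assumption and to read ``conditional minimizer'' as pointwise optimality under $p_t$, that is, uniqueness up to null sets. It is also worth noting that $\hat u$ is measurable in $(z_t,t)$, so it is an admissible vector field. I would handle this by appealing to the existence of regular conditional distributions, or more simply to the $L^2$-projection characterization of conditional expectation.
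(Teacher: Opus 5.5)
Your proposal is correct and follows essentially the same route as the paper: add and subtract $\hat u$, kill the cross term using $\mathbb{E}[u^*\mid z_t,t]=\hat u$, and note that the remaining conditional variance is independent of $\psi$, so the objective reduces to $\|u^\psi-\hat u\|^2+C$. Your explicit tower-property step and the square-integrability and measurability remarks make rigorous what the paper leaves implicit.
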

See proof in Appendix~\ref{append:derivation1}. Since the Gaussian approximation is unimodal and, in the presence of multimodality, its solution follows the average direction of $z_1-z_0$, it tends to underfit multimodal vector fields, especially under few-step sampling and limited training data. Motivated by the intuition that multimodality can be better captured by employing multiple vector fields associated with different modes, we incorporate a mixture-of-experts mechanism (MoE; \citeauthor{moe1991}, \citeyear{moe1991}; \citeauthor{moe1994}, \citeyear{moe1994}) and propose mixture-of-experts flow matching in the following section.

\subsection{Mixture-of-Experts Flow Matching}\label{sec:moe_fm}

\textbf{Mixture-of-experts flow matching} (\moefm) aims to improve generative performance and inference efficiency by decomposing global transport into multiple locally specialized vector fields that model heterogeneous transport geometries. Specifically, it introduces $K$ expert vector fields
\begin{equation*}
    u_k^\psi:\mathbb{R}^m\times [0,1]\to \mathbb{R}^m,\;\;(z_t,t)\mapsto u_{k,t}^\psi(z_t),
\end{equation*}
for $k=1, \dots, K$, together with a learnable gating network that outputs $K$ routing probabilities
\begin{equation*}
    \pi^\psi:\mathbb{R}^m\times [0,1]\to \Delta^{K-1},\;\;(z_t,t)\mapsto (\pi_1^\psi, \dots, \pi_K^\psi),
\end{equation*}
where $\Delta^{K-1}$ denotes the $(K-1)$-dimensional probability simplex, meaning that $\pi_k^\psi\ge 0$ and $\sum_{k=1}^K \pi_k^\psi=1$. The true conditional distribution $q_\data(u^*\mid z_t,t)$ is approximated by an MoE model
\begin{equation}\label{eq:moe model}
\begin{aligned}
    q_\psi^\moefm&(u^*\mid z_t,t)=\sum_{k=1}^K\pi_k^\psi(z_t,t)\mathcal{N}(u^*;u_{k,t}^\psi(z_t),\sigma^2 I),
\end{aligned}
\end{equation}
by minimizing the \kl\ divergence $\kl\big(q_\data\|q_\psi^\moefm\big)$, where $\sigma\ge 0$ is a pre-specified parameter discussed below. This is equivalent to minimizing the negative log-likelihood (NLL) loss
\begin{equation}\label{eq:moe_loss}
\begin{aligned}
    \mathcal{L}_\moefm(\psi)=\mathbb{E}_{t,z_1,z_t\mid z_1}\bigg[-\log\sum_{k=1}^K\Big\{\pi_k^\psi(z_t,t) \times\exp\Big(
    -\frac{1}{2\sigma^2}\|u_{k,t}^\psi(z_t)-(z_1-z_0)\|^2\Big)\Big\}\bigg].
\end{aligned}
\end{equation}

\textbf{Properties and Benefits.} To characterize the behavior of \moefm, the following theorem describes the optimal expert vector fields and routing functions.
\begin{theorem}\label{theorem}
The \moefm\ objective~\eqref{eq:moe_loss} admits the following conditional optima:
\begin{equation*}
\begin{aligned}
    \hat{\pi}_k^\moefm(z_t,t)&=\mathbb{E}[\gamma_k^\psi(z_t,t,z_1-z_0)\mid z_t,t],\\
    \hat{u}_k^\moefm(z_t,t)&=\frac{\mathbb{E}[\gamma_k^\psi(z_t,t,z_1-z_0)(z_1-z_0) \mid z_t,t]}{\mathbb{E}[\gamma_k^\psi(z_t,t,z_1-z_0)\mid z_t,t]},
\end{aligned}
\end{equation*}
for $k=1, \dots, K$. Here, $\gamma_k^\psi(z_t,t,z_1-z_0)$ denotes the $k$-th expert responsibility, defined as
\begin{equation*}
\gamma_k^\psi(z_t,t,z_1-z_0)=\frac{\pi_k^\psi \kappa_\sigma\big(z_1-z_0, u_{k,t}^\psi(z_t)\big)}{\sum_{k'=1}^K\pi_{k'}^\psi\kappa_\sigma\big(z_1-z_0, u_{k',t}^\psi(z_t)\big)},
\end{equation*}
where $\kappa_\sigma(v,v')=\exp\left\{-\|v-v'\|^2/(2\sigma^2)\right\}$ is the radial basis function kernel. 
\end{theorem}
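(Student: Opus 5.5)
The plan is to mirror the argument behind Proposition~\ref{proposition}: condition on $(z_t,t)$ and optimize pointwise. The parameterized quantities $\pi^\psi(z_t,t)$ and $u_{k,t}^\psi(z_t)$ depend only on $(z_t,t)$, so the expectation in~\eqref{eq:moe_loss} can be written as $\mathbb{E}_{t,z_t}\big[\mathcal{J}(\pi,u_1,\dots,u_K;z_t,t)\big]$. Here
\begin{equation*}
\mathcal{J}=\mathbb{E}\Big[-\log\sum_{k=1}^K\pi_k\,\kappa_\sigma\big(z_1-z_0,u_k\big)\,\Big|\,z_t,t\Big].
\end{equation*}
I treat $\pi\in\Delta^{K-1}$ and $u_k\in\mathbb{R}^m$ as free variables at each fixed $(z_t,t)$, as for a sufficiently expressive network. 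A conditional optimum is then any stationary point of $\mathcal{J}$ under the simplex constraint. The normalizing constant of the Gaussian does not depend on $\psi$, so it can be dropped.

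\textbf{Routing weights.} Form the Lagrangian $\mathcal{J}+\lambda(\sum_k\pi_k-1)$ and differentiate under the conditional expectation. The derivative of the log-sum gives
\begin{equation*}
\partial_{\pi_k}\mathcal{J}=-\mathbb{E}\big[\kappa_\sigma(z_1-z_0,u_k)/\textstyle\sum_{k'}\pi_{k'}\kappa_\sigma(z_1-z_0,u_{k'})\mid z_t,t\big]=-\mathbb{E}[\gamma_k/\pi_k\mid z_t,t].
\end{equation*}
Setting this equal to $-\lambda$ yields $\mathbb{E}[\gamma_k\mid z_t,t]=\lambda\pi_k$. Summing over $k$ and using $\sum_k\gamma_k=1$ gives $\lambda=1$, and hence $\hat\pi_k=\mathbb{E}[\gamma_k\mid z_t,t]$.

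\textbf{Experts.} Since $\nabla_{u_k}\kappa_\sigma(v,u_k)=\kappa_\sigma(v,u_k)(v-u_k)/\sigma^2$, we get $\nabla_{u_k}\mathcal{J}=-\sigma^{-2}\,\mathbb{E}[\gamma_k\,(z_1-z_0-u_k)\mid z_t,t]$. The quantity $u_k$ is deterministic given $(z_t,t)$, so setting the gradient to zero gives $u_k\,\mathbb{E}[\gamma_k\mid z_t,t]=\mathbb{E}[\gamma_k(z_1-z_0)\mid z_t,t]$. Dividing by $\mathbb{E}[\gamma_k\mid z_t,t]$ gives the stated formula for $\hat u_k$; this requires the denominator to be positive, which holds for $\sigma>0$ and $\pi_k>0$.

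\textbf{Main obstacle.} The responsibilities $\gamma_k^\psi$ themselves depend on the parameters. The stated "optima" are therefore fixed-point (EM-type) characterizations, i.e., first-order stationarity conditions, rather than closed-form global minimizers, and the proof should say so. I would also justify exchanging differentiation and conditional expectation. For $\sigma>0$ this follows by dominated convergence: the $\gamma_k$ lie in $[0,1]$, and $z_1-z_0$ has finite second moment, so $\|z_1-z_0\|$ is integrable.

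Finally, I would check boundary cases, where some $\pi_k=0$ or the denominator vanishes, using the KKT conditions. The EM interpretation shows these are consistent: at a fixed point, the M-step of maximizing the expected complete-data log-likelihood $\mathbb{E}[\sum_k\gamma_k(\log\pi_k+\log\kappa_\sigma)]$ with $\gamma$ held fixed reproduces exactly these formulas. This gives an alternative derivation that I might present instead for clarity.
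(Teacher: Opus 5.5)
Your proposal is correct and follows essentially the same route as the paper. Both condition on $(z_t,t)$ via iterated expectations and justify differentiating under the conditional expectation by dominated convergence. The paper's lemma assumes $\pi_k\ge\epsilon$, $\|u_k\|\le B$ and $\mathbb{E}[\|z_1-z_0\|\mid z_t,t]<\infty$; the lower bound on $\pi_k$ is what controls the routing gradient $\gamma_k/\pi_k$, a point your domination sketch leaves implicit. Both then set $\nabla_{u_k}\ell=0$ for the experts and use a Lagrange multiplier with $\sum_k\gamma_k=1$, giving $\lambda=1$ for the routings. Your caveat that these are fixed-point (stationarity) conditions rather than closed-form global minimizers, because $\gamma_k^\psi$ itself depends on the parameters, accurately sharpens the paper's wording, which calls them minimizers.
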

The proof is provided in Appendix~\ref{append:derivation2}. These results show that the expert responsibilities $\gamma_k^\psi$ implement a soft gating mechanism that assigns each velocity target $u^*=z_1-z_0$ to experts according to proximity in the vector field space. Each expert vector field $\hat{u}_k^\moefm$ then estimates a local transport direction via $\gamma_k^\psi$-weighted averaging. Consequently, different experts specialize in distinct local flow geometries. As shown in Fig.~\ref{fig:moe}, \moefm\ effectively allocates different expert vector fields to model distinct regions of the space. This produces higher-quality samples that more faithfully recover the data distribution compared to \vfm. Moreover, the interpolating trajectories are substantially straighter, enabling accurate reconstruction of multimodal samples with very few sampling steps (e.g., four steps). 

\textbf{Special Cases.} (1) When $K=1$, \moefm\ reduces to the \vfm\ method, in which $\sigma$ is independent of $\psi$ and therefore does not affect the training objective. (2) The parameter $\sigma\ge 0$ controls the softness of expert assignments. As $\sigma\to 0$, the routing converges to a hard assignment rule using nearest-neighbor assignment. As $\sigma\to\infty$, the likelihood becomes non-identifiable in the expert assignments, meaning that all assignments are equally likely. (see Appendix~\ref{append:derivation3}).

\textbf{Sampling.} We adopt a trajectory-level frozen routing strategy during sampling (Fig.~\ref{fig:yan_overview}). Specifically, we sample a discrete expert assignment $e\sim\mathrm{Cat}(\pi^\psi(z_0,0))$ at time $t=0$, and then keep the assignment fixed throughout the trajectory. The final state is then obtained by integrating the corresponding time-dependent vector field $u_{e,t}^\psi(z_t)$. This yields a trajectory-level conditional generation process, in which each trajectory is governed by a single expert vector field. Compared to time-varying routing strategies, frozen routing avoids frequent expert switching during ODE integration, resulting in much more stable and computationally efficient integration. Moreover, it preserves trajectory-level geometric consistency by preventing continuous interpolation across different transport fields. In our language modeling formulation, we apply the MoE mechanism at the token level, allowing token-specific trajectories.

\section{\nar\ Language Modeling via \yan}\label{sec:nar_lm_via_yan}

We propose a flow-based \nar\ language modeling paradigm (Sec.~\ref{sec:yan_formulation}) and describe its training strategy (Sec.~\ref{sec:training}). The overall pipeline is illustrated in Fig.~\ref{fig:yan_overview}. 

\subsection{Mathematical Formulation}\label{sec:yan_formulation}

\yan\ is a latent variable language model in the class of~\eqref{eq:latent variable model}, parameterized as
\begin{equation}\label{eq:yan}
    p_{\theta,\psi}(y\mid x)=\int p_\theta(y\mid z)p_\psi(z\mid x)dz,
\end{equation}
where $p_\theta(y\mid z)$ is a \textit{decoder} that generates the target sequence $y$ from a continuous latent representation $z\in\mathbb{R}^{\ly\times d}$, and $p_\psi(z\mid x)$ is a \textit{conditional latent generator} given the source sequence $x$. Additionally, \yan\ introduces an \textit{encoder}, $\mathcal{E}_\phi:\vocab^{L^*}\to\mathbb{R}^{L^*\times d}$, mapping discrete sequences of arbitrary length $L^*$ to a continuous space $\mathbb{R}^{L^*\times d}$. 

In \yan, the latent variable $z$ is intended to capture the global semantic content and token-level dependency structure of the target sequence $y$. Conditioning on such a representation is designed to reduce token dependencies and, ideally, render tokens $y_{1:\ly}$ conditionally independent, i.e., 
\begin{equation*}
    p_\theta(y\mid z)=\prod_{l=1}^\ly p_\theta(y_l\mid z).
\end{equation*}
Under this assumption, decoding can be performed in a fully parallel manner where each token is generated independently as
\begin{equation}\label{eq:yan decoder}
    y_l\mid z\sim \mathrm{Cat}\left(\xi_{\theta,l}(z)\right),\quad l=1, \dots, \ly,
\end{equation}
and $\{\xi_{\theta,l}(z)\}_{l=1}^\ly\subset \Delta^{V-1}$ denote token-wise probability vectors. The conditional latent generator $p_\psi(z\mid x)$ is learned using the mixture-of-experts flow matching proposed in Sec.~\ref{sec:flow_matching}. The encoder serves two purposes. First, it contextualizes the source sequence via $z_\src=\mathcal{E}_\phi(x)$. Second, it constructs a target latent representation $z_\tgt=\mathcal{E}_\phi(y)$, which defines the endpoint that the latent generator is trained to flow toward. This design enables self-supervised training of the latent flow. In the absence of such a target, an alternative is to introduce a pretrained teacher for distillation; however, in our preliminary experiments, we observe no empirical benefit from doing so, consistent with findings in other work \citep{diffusionlm_controllable_2022}.

In summary, \yan\ aims to learn a latent representation $z$ that is sufficiently expressive for the target sequence $y$ to be decoded using a very small number of parallelizable layers, while employing flow matching as an efficient conditional latent generator. Motivated by this, we name the model \textbf{YAN}---\textit{Flow Until \textbf{Y}ou \textbf{A}lmost K\textbf{n}ow}.

\subsection{Two-Stage Training}\label{sec:training}

\begin{figure}[t]
  \begin{center}
    \includegraphics[width=0.6\columnwidth]{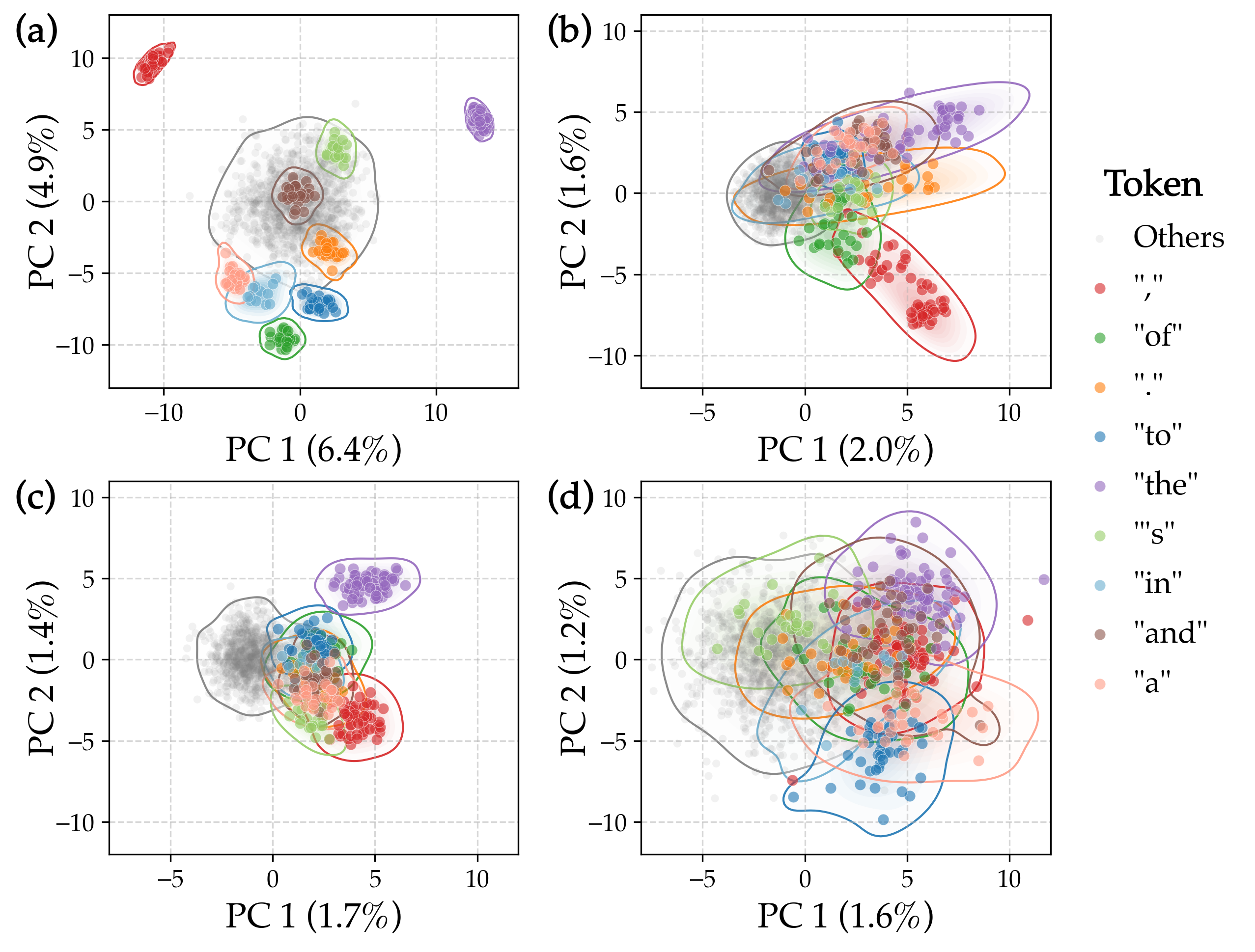}
    \vskip -0.07in
    \caption{Latent representation distributions under different regularization schemes, projected onto the first two principal components and colored by selected high-frequency tokens: (a) unregularized cross-entropy, (b) \mmd, (c) \mmd\ + scale regularization, and (d) \mmd\ + scale regularization with noisy perturbation. Percentages in parentheses denote the variance explained. Reduced variance concentration in the leading components indicates more isotropic latent spaces, with no dominant direction of variation.}
    \label{fig:regularization}
  \end{center}
\end{figure}

Under the latent variable formulation~\eqref{eq:yan}, direct maximum likelihood training is intractable due to the marginalization over the latent variable. As an alternative, we adopt a two-stage training strategy. 

\textbf{Stage 1: Train the Encoder and Decoder.} \yan\ employs an asymmetric autoencoder comprised of a high-capacity encoder and a lightweight decoder. Let $\tilde{y}\in\vocab^{\tilde{L}}$ denote a text sequence from distribution $\tilde{p}_\data$ and $\tilde{z}=\mathcal{E}_\phi(\tilde{y})$ be its encoded latent representation. The first training stage learns to perform reconstructions of $\tilde{y}$ using this autoencoder under objective 
\begin{equation}\label{eq:loss ae}
\begin{aligned}
    \mathcal{L}_{\text{RegularizedAE}}&(\Theta)=\lambda_\ce\mathcal{L}_\ce(\Theta)+\lambda_\mmd\mathcal{L}_\mmd(\Theta)+\lambda_\scale\mathcal{L}_\scale(\Theta).
\end{aligned}
\end{equation}
for pre-specified tuning parameters $\lambda_\ce, \lambda_\mmd, \lambda_\scale \ge 0$, where $\Theta=(\theta,\phi,\psi)$ denotes the collection of learnable parameters for notational simplicity.

Here, the cross-entropy (CE) reconstruction loss,
\begin{equation}\label{eq:loss ce}
    \mathcal{L}_\ce(\Theta)=\mathbb{E}_{\tilde{y}\sim \tilde{p}_\data}\bigg[-\sum_{l=1}^{\tilde{L}}\log p_\theta(\tilde{y}_l\mid \tilde{z})\bigg], 
\end{equation}
is augmented by two latent regularizers. The first regularizer is the Maximum Mean Discrepancy (\mmd; \citeauthor{kernel_two_sample_test2012}, \citeyear{kernel_two_sample_test2012}; \citeauthor{training_mmd2015}, \citeyear{training_mmd2015}), a measure of the discrepancy between the marginal distribution $P$ of $\tilde{z}$ and the standard Gaussian distribution $Q=\mathcal{N}(0,I)$, defined as 
\begin{equation*}\label{eq:mmd definition}
\begin{aligned}
    \mmd_\kappa^2(P,Q)=\mathbb{E}_{x,x'\sim P}[\kappa(x,x')]+\mathbb{E}_{y,y'\sim Q}[\kappa(y,y')]-2\mathbb{E}_{x\sim P, y\sim Q}[\kappa(x,y)],
\end{aligned}
\end{equation*}
where $\kappa(x,x')=\sum_{s\in\mathcal{S}}\kappa_s(x,x')$ is a sum of radial basis function kernels $\kappa_s(x,x')=\exp\left\{-\|x-x'\|^2/(2s^2)\right\}$ over a range of bandwidths $s\in\mathcal{S}$. Replacing the expectations with the corresponding empirical estimates yields the \mmd\ term used in training,
\begin{equation}\label{eq:loss mmd}
    \mathcal{L}_\mmd(\Theta)=\widehat{\mmd}_\kappa^2(P,Q).
\end{equation}
See Appendix~\ref{append:mmd} for details. Our use of the \mmd\ regularizer here is motivated by the Wasserstein autoencoder (WAE; \citeauthor{wasserstein_ae2018}, \citeyear{wasserstein_ae2018}). Although closely related to the variational autoencoder (VAE; \citeauthor{vae_kingma2013}, \citeyear{vae_kingma2013}), WAE outperforms VAE by encouraging the latent representations to match the Gaussian distribution marginally rather than conditionally, thereby improving reconstruction quality \citep{wasserstein_ae2018}. We also apply an $\ell_2$ penalty on the scale of $\tilde{z}$, $\mathcal{L}_\scale(\Theta)=\|\tilde{z}\|^2$, to discourage latent representations from drifting excessively far from the origin.

Theoretically, these regularization choices can be viewed as promoting latent isotropy while preserving local semantic structure, a property shown to benefit downstream performance and facilitate effective utilization of the latent space \citep{shrink_the_longest2024, on_the_isotropy2023}. Following the same principle, we further inject small Gaussian perturbations into the encoder outputs and let the decoder reconstruct from these perturbed representations, thereby expanding the region associated with each latent code. Fig.~\ref{fig:regularization} visualizes the learned latent representation distributions under different regularization schemes. As shown, imposing \mmd\ and scale regularization encourages the distribution to concentrate toward the origin and exhibit more isotropic, Gaussian-like behavior.

\textbf{Stage 2: Train the Latent Flow.} The second stage trains the \moefm\ via the following objective 
\begin{equation}\label{eq:yan_objective}
\mathcal{L}_\yan(\Theta)=\alpha_\moefm\mathcal{L}_\moefm(\Theta)+\alpha_\ce\mathcal{L}_\ce(\Theta),
\end{equation}
where $\mathcal{L}_\moefm$ is the negative log-likelihood loss defined in Eq.~\eqref{eq:moe_loss}. In this stage, the latent flow is trained with initial point $z_0\sim \mathcal{N}(0,I)$ and target endpoint $z_\tgt=\mathcal{E}_\phi(y)$, conditioned on the source latent representation $z_\src=\mathcal{E}_\phi(x)$. After obtaining the final latent state $\hat{z}_1$ via the frozen routing sampling with an Euler solver, $\mathcal{L}_\ce$ is computed as the cross-entropy loss between the ground-truth sequence and the sequence decoded from $\hat{z}_1$. We provide additional analysis of the training in Appendix~\ref{append:ce_supervision}.

\section{Experiments}\label{sec:experiments}

\input{table.tex}

\subsection{Experimental Setup}

Our experiments evaluate the language generation and understanding capabilities of \yan, as well as its inference efficiency, particularly on long-form documents. 

In language model evaluation, while perplexity is a common evaluation metric, we do not use it in our setting for two reasons. First, perplexity has known limitations in measuring long-range understanding and does not always align with human-like language processing \citep{perplexity_limitation_2024, perplexity_limitation2_2021}. Second, theoretically, \nar\ language models---including the latent variable formulation in this work---generally do not admit tractable likelihoods. Instead, likelihoods must be approximated via estimation procedures or variational lower bounds (\citeauthor{mdlm2024}, \citeyear{mdlm2024}; \citeauthor{sedd2024}, \citeyear{sedd2024}), making cross-method perplexity comparisons unreliable. Accordingly, we adopt metrics that directly assess generation quality across multiple downstream tasks, in line with prior \nar\ work \citep{diffusionlm_controllable_2022, diffulamma2025}.

\textbf{Architecture.} We parameterize \yan\ using both Transformer \citep{transformer2017} and Mamba \citep{mamba_gu2024} architectures, as shown in Fig.~\ref{fig:yan_overview}. The Transformer accepts both state and time as inputs, similar to Diffusion Transformers \citep{dit_2023}, and incorporates cross-attention to the source input. It operates bidirectionally without causal masking. Mamba does not support cross-attention by design; therefore, we condition on the source input by combining Mamba-based self token mixing with an explicit cross-attention layer. Under computational constraints, we train \yan\ at the 200M-parameter scale. Detailed hyperparameter settings are provided in Appendix~\ref{append:architecture_details}.

\textbf{Training.} We pretrain \yan\ on a dataset consisting of FineWiki (75\%) \citep{finewiki_data} and FineWeb (25\%) \citep{fineweb_data}. FineWiki contains full-length Wikipedia articles and FineWeb is a large corpus of English web text. See Appendix~\ref{append:training_data} for detailed dataset statistics and descriptions.

\textbf{Tasks and Metrics.} We consider four downstream tasks. (1) \textbf{Text infilling} evaluates the ability to leverage bidirectional global context to generate coherent text for missing spans. We use NarrativeQA \citep{narrativeqa_data} and report ROUGE \citep{rouge2004} and tokens per second (TPS). (2) \textbf{Last-word completion} evaluates next-token prediction given left-to-right context. We evaluate on ROCStories \citep{rocstories_data} and SimpleStories \citep{simplestories2025} using exact match for accuracy and BERTScore-F1 \citep{bertscore2020} for semantic similarity. (3) \textbf{Question answering (QA)} assesses reading comprehension and answer extraction from supporting passages. We evaluate on bAbI \citep{babi_data1, babi_data2} and SQuAD \citep{squad_f1_2016} using F1 score and BERTScore-F1. (4) \textbf{Classification} is a standard benchmark task for language understanding \citep{glue_evaluation}. We use the AG News, DBpedia \citep{agnews_dbpedia_data}, and SST-2 \citep{sst2_data} datasets and report classification accuracy. Additional dataset and evaluation details are provided in Appendix~\ref{append:evaluation_details}. We further evaluate diversity using Dist-2 \citep{dist2_metric2016} for bigram diversity, Self-BLEU \citep{selfblue_metric2018} for sentence-level similarity, and semantic distance (SemDist), computed as the average cosine distance between Sentence-BERT embeddings \citep{semanticdist_metric2019}.

\textbf{Baselines.} For \nar\ methods, we include \llada\ (8B-Base) \citep{llada2025}, which represents the state of the art in performance and scale among diffusion language models, including DiffuSeq (91M) \citep{diffuseq2023}, Plaid (1B) \citep{likelihood_based_diffusion2023}, and MDLM (110M) \citep{mdlm2024}. We also evaluated \diffullama\ (7B) \citep{diffulamma2025}, an \nar\ diffusion model annealed from pretrained LLaMA \citep{llama2024}; however, its performance is consistently inferior to that of \llada, and results are therefore omitted. For \ar\ methods, we include BART (139M) \citep{bart2020}, whose source-target structure is similar to \yan, whereas \llada\ treats the source text as a prefix concatenated to the target. We also include GPT-2 (124M) \citep{gpt2_2019}, a decoder-only \ar\ baseline frequently used in prior \nar\ work (e.g., \citeauthor{mdlm2024}, \citeyear{mdlm2024}; \citeauthor{diffuseq2023}, \citeyear{diffuseq2023}; \citeauthor{diffulamma2025}, \citeyear{diffulamma2025}). These \ar\ models are smaller than \yan, thereby controlling for model size when evaluating inference efficiency.

Since \yan\ is not designed for general-purpose zero-shot language tasks in its current scope, we fine-tune it on each downstream task, following common practice for models with comparable capacity \citep{diffuseq2023, mdlm2024}. For fairness, baseline models are also fine-tuned according to the procedures described in their original papers (see Appendix~\ref{append:finetune} for details). To ensure a consistent comparison of efficiency, we perform inference with a batch size of 1 and use oracle sequence lengths for all methods, thereby eliminating the impact of padding on efficiency measurements. We adopt greedy sampling for all methods. All evaluations are conducted on a single NVIDIA H200 GPU.

\begin{figure}[t]
  \begin{center}
    \includegraphics[width=0.7\columnwidth]{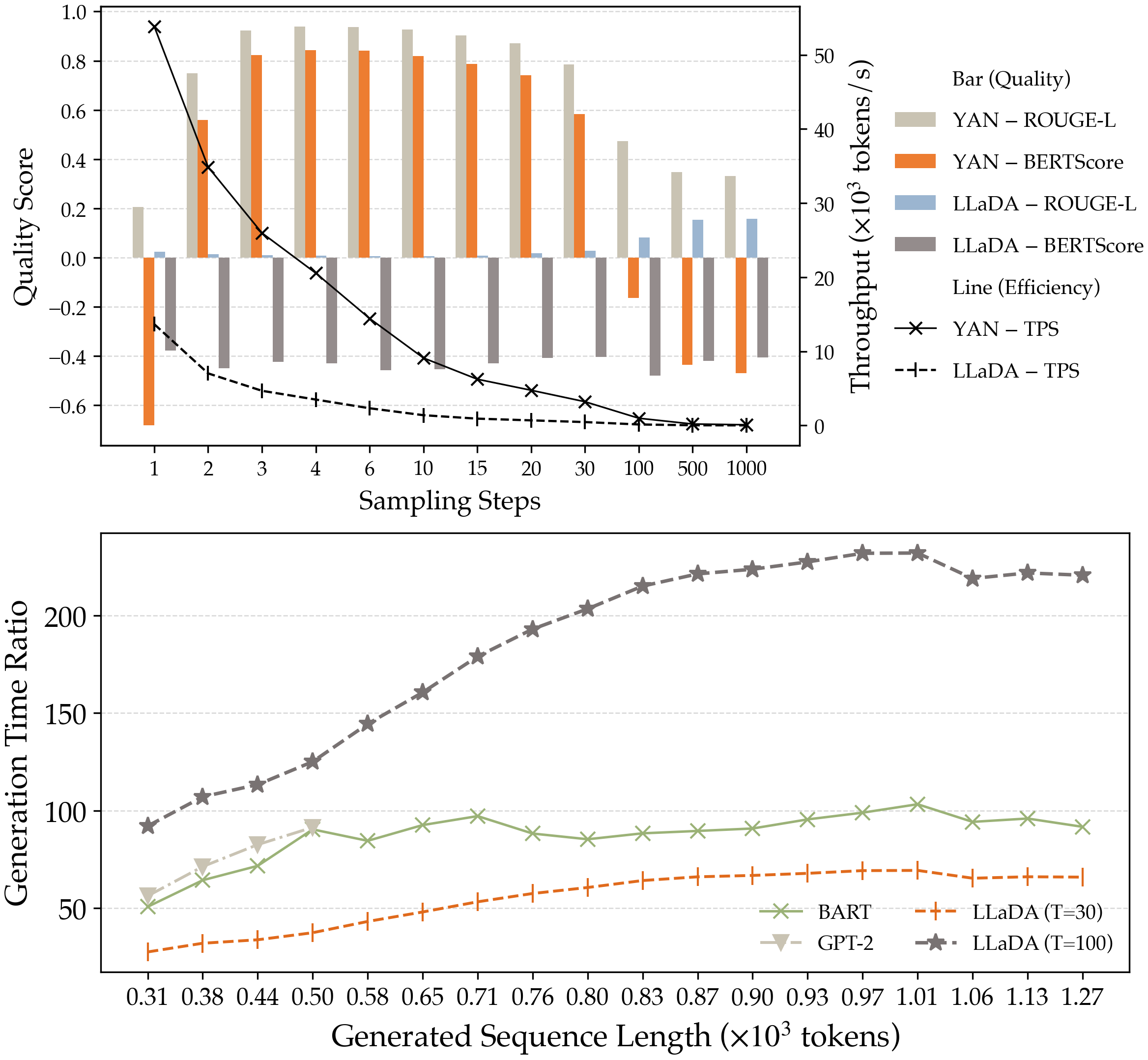}
    \caption{\textbf{Inference efficiency comparison.} \textbf{Top:} Generation quality versus the sampling step $T$ on the text infilling task. \textbf{Bottom:} Inference time across sequence lengths, reported as a ratio relative to \yan\ with $T=3$ (GPT-2 is truncated due to its maximum context length of $512$). In both plots, \yan\ refers to the Transformer-based \yan\ model. \yan\ achieves high-quality generation with as few as three sampling steps, yielding a $40\text{--}50\times$ speedup over \ar\ baselines. In contrast, the diffusion language model \llada\ requires approximately one inference step per token to reach acceptable generation quality (i.e., $T=1000$), resulting in approximately $10^3\times$ higher inference latency for long documents (the $T=1000$ curve is omitted for clarity).}
    \label{fig:efficiency_analysis}
  \end{center}
\end{figure}

\subsection{Language Modeling Capabilities}

At the current model scale, \yan\ demonstrates solid language modeling capabilities, showing strong performance in both text generation and understanding, as shown in Tab.~\ref{tab:main_results}.

\textbf{Language Understanding Performance.} \yan\ consistently outperforms baseline models on classification tasks, achieving near-perfect accuracy on AG News and DBpedia. On the more challenging SST-2 binary sentiment dataset, \yan\ continues to deliver the strongest performance, followed by \llada. In QA tasks, \llada\ attains the highest quality scores, with \yan\ ranking second, partially due to the larger training scale of \llada, which provides richer world knowledge. Nonetheless, the competitive performance of both \llada\ and \yan\ across QA benchmarks indicates that \nar\ models exhibit stronger global text comprehension compared to \ar\ models, particularly outperforming decoder-only architectures such as GPT-2.

\textbf{Text Generation Quality.} \yan\ achieves the highest generation quality on both text infilling and last-word completion tasks. In text infilling, \yan\ outperforms BART, which is trained as a denoising autoencoder. This advantage can be partially attributed to the longer context length used by \yan, which facilitates the modeling of long-range dependencies in the NarrativeQA dataset. For last-word completion, \yan\ surpasses GPT-2 and attains the highest accuracy on SimpleStories, which is longer than ROCStories and thus provides richer contextual information.

\textbf{Transformer versus Mamba.} While \yan\ exhibits competent language modeling capabilities under both architectures, the Transformer-based variant generally outperforms its Mamba-based counterpart. This performance gap suggests that the self-attention mechanism---in particular, bidirectional attention without causal masking in our \nar\ setting---is more effective at capturing contextual information and modeling long-range dependencies. These empirical results are consistent with prior analyses showing that Mamba architectures tend to underperform Transformers on memory-intensive tasks, including information retrieval and long-context understanding \citep{mamba_empirical2024, repeat_after_me2024, decimamba2025}.

\subsection{Inference Efficiency Analysis}

We observe a substantial inference efficiency advantage of \yan\ over both \ar\ and diffusion-based \nar\ models. Fig.~\ref{fig:efficiency_analysis} analyzes inference efficiency as a function of the sampling step $T$ and the generated sequence length on the infilling task, leading to the following key findings. 

\textbf{\yan\ achieves high-quality long-document generation with as few as three Euler sampling steps.} This behavior is also observed across other considered downstream tasks with shorter target sequences, as shown in Tab.~\ref{tab:main_results} and in more detailed results provided in Appendix~\ref{append:sensitivity_to_T}. This stands in sharp contrast to \llada, which requires approximately one inference step per token to reach acceptable generation quality. Moreover, increasing the sampling steps beyond this low-step regime does not yield further performance improvements and is therefore unnecessary.

\textbf{\yan\ achieves a $40\text{--}50\times$ inference speedup over \ar\ baselines and a speedup on the order of $10^3\times$ over the diffusion language model.} The former comparison is made against GPT-2 and BART, which have smaller parameter scales than \yan, indicating that the observed efficiency advantage is not driven by model capacity but instead arises from the parallel decoding with the \moefm\ generator. The latter comparison is against \llada\ with $T=1000$, which is required to attain its highest generation quality, as shown in Fig.~\ref{fig:efficiency_analysis} (see Appendix~\ref{append:time_of_llada} for additional results).

\begin{figure}[ht]
  \begin{center}
    \includegraphics[width=0.7\columnwidth]{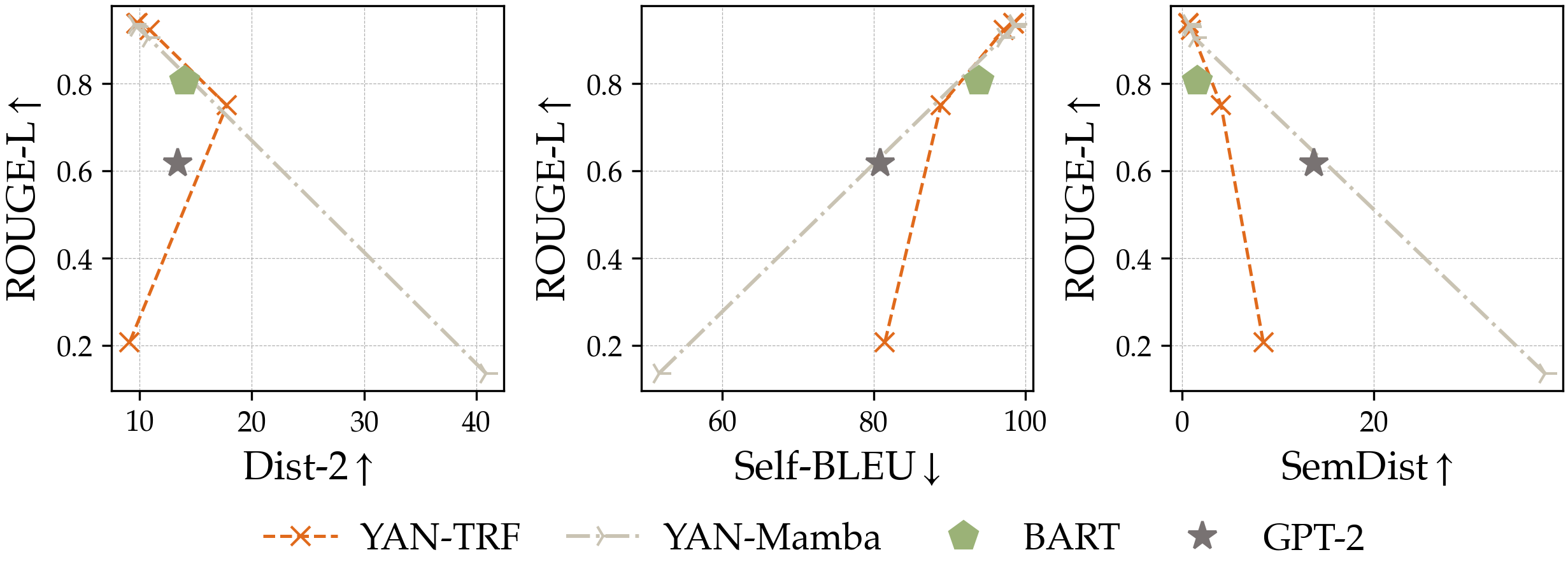}
    \caption{Trade-off between quality and diversity.}
    \label{fig:diversity_tradeoff}
  \end{center}
\end{figure}

\subsection{Diversity Analysis}

While the sampling path follows deterministic ODE trajectories, \yan\ is a valid generative model with stochasticity in the generated text. In this case, diversity becomes an important metric for preventing mode collapse. However, the evaluation of generation diversity is traded off against generation quality in open-ended language generation tasks \citep{diversity_quality_tradeoff2021}. Fig.~\ref{fig:diversity_tradeoff} illustrates the trade-off between quality and diversity on the infilling task under different sampling steps. As shown, configurations that prioritize quality typically correspond to lower diversity. Nevertheless, Fig.~\ref{fig:diversity_tradeoff} also shows configurations whose diversity exceeds that of the baselines, indicating that diversity can be effectively preserved and adjusted through the choice of configuration, such as the number of sampling steps.

\section{Conclusions}

We presented mixture-of-experts flow matching that enhances the latent representational capacity of vanilla flow matching when applied to text. Building on this formulation, we proposed \yan, a non-autoregressive language model using latent flows. \yan\ achieves high-quality generation with substantially fewer decoding steps, leading to faster inference compared to autoregressive and diffusion-based language models. Given its effectiveness at the current model scale, a promising direction for future work is to scale this approach to larger models and datasets. Moreover, while our current implementation adopts a dense mixture-of-experts formulation, exploring sparse expert routing as the model scales may further improve inference efficiency while maintaining generation quality.

\bibliography{paper.bib}
\bibliographystyle{abbrvnat}


\appendix

\section{Mathematical Details}

\subsection{Proof of Proposition~\ref{proposition}}\label{append:derivation1}

The following proposition restates Proposition~\ref{proposition}, with subscripts explicitly indicating the random variables with respect to which the expectations are taken.

\begin{proposition}
The vanilla flow matching objective~\eqref{eq:vfm loss}
$$
\mathcal{L}_\vfm(\psi)=\mathbb{E}_{t,z_0,z_1,z_t}\|u_t^\psi(z_t)-(z_1-z_0)\|^2
$$
is conditionally minimized by 
$$
\hat{u}^\vfm(z_t,t)=\mathbb{E}_{u^*}[u^*\mid z_t,t]=\mathbb{E}_{z_0,z_1}[z_1-z_0\mid z_t, t].
$$
where $u^*=z_1-z_0$. 
\end{proposition}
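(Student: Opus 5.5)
The approach is the standard $L^2$-projection argument: condition on $(z_t,t)$ and show that the conditional mean minimizes the conditional mean-squared error pointwise. By the tower property, write
\begin{equation*}
\mathcal{L}_\vfm(\psi)=\mathbb{E}_{t,z_t}\Big[\,\mathbb{E}_{z_0,z_1}\big[\|u_t^\psi(z_t)-(z_1-z_0)\|^2\mid z_t,t\big]\Big].
\end{equation*}
Since $u_t^\psi(z_t)$ is a deterministic function of $(z_t,t)$, the outer expectation is minimized over all measurable vector fields if and only if the inner conditional expectation is minimized for ($p_t$-almost) every fixed pair $(z_t,t)$. This reduces the problem to minimizing $g(v)=\mathbb{E}[\|v-u^*\|^2\mid z_t,t]$ over constant vectors $v\in\mathbb{R}^m$. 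Here $u^*=z_1-z_0$, and its conditional law given $(z_t,t)$ is the $q_\data(u^*\mid z_t,t)$ of Sec.~\ref{sec:limitations_of_vfm}.

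Next, set $\bar u=\mathbb{E}[u^*\mid z_t,t]$ and add and subtract it inside the norm:
\begin{equation*}
g(v)=\|v-\bar u\|^2+2\,(v-\bar u)^\top\,\mathbb{E}[\bar u-u^*\mid z_t,t]+\mathbb{E}[\|u^*-\bar u\|^2\mid z_t,t].
\end{equation*}
The cross term vanishes by the definition of $\bar u$. The last term is the conditional variance, which does not depend on $v$. Hence $g(v)\ge g(\bar u)$, with equality iff $v=\bar u$. Equivalently, $g$ is strictly convex with gradient $2(v-\bar u)$, so its unique stationary point is $\bar u$. This gives $\hat u^\vfm(z_t,t)=\mathbb{E}_{z_0,z_1}[z_1-z_0\mid z_t,t]$.

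The only real obstacle is technical rather than algebraic: justifying the interchange of conditioning and minimization. This needs $u^*$ to have a finite second moment; that holds because $z_0\sim\mathcal{N}(0,I)$ and $z_1$ has finite second moment, which the scale regularization in Sec.~\ref{sec:training} also encourages. It also needs the model class to be rich enough that the pointwise minimizer is attainable. I would state the result as optimality over all measurable vector fields, hence ``conditional minimizer''. If $u_t^\psi$ is restricted, the same decomposition shows that $\mathcal{L}_\vfm(\psi)$ equals $\mathbb{E}\|u_t^\psi(z_t)-\hat u^\vfm(z_t,t)\|^2$ plus a constant. This recovers the equivalence with the marginal objective $\mathcal{L}_\fm$ noted after Eq.~\eqref{eq:cfm loss}.
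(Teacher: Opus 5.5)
Your proposal is correct and follows essentially the same route as the paper: add and subtract the conditional mean $\hat u=\mathbb{E}[u^*\mid z_t,t]$, note that the cross term vanishes by the definition of $\hat u$, and observe that the remaining conditional variance does not depend on $\psi$, so the conditional objective is $\|u^\psi-\hat u\|^2$ plus a constant. Your additional remarks (the tower-property reduction, finite second moments, and optimality over all measurable vector fields) spell out regularity conditions that the paper's proof leaves implicit.
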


\begin{proof}
Write $\hat{u}=\hat{u}^\vfm(z_t,t)$ for brevity. Apply decomposition
$$
\|u^\psi-u^*\|^2=\|u^\psi-\hat{u}+\hat{u}-u^*\|^2=\|u^\psi-\hat{u}\|^2+\|\hat{u}-u^*\|^2+2(u^\psi-\hat{u})^\top(\hat{u}-u^*),
$$
where the cross term vanishes in expectation since
$$
\begin{aligned}
    \mathbb{E}_{u^*}\left[(u^\psi-\hat{u})^\top(\hat{u}-u^*)\mid z_t,t\right]&=(u^\psi-\hat{u})^\top\mathbb{E}_{u^*}[\hat{u}-u^*\mid z_t,t]\\
    &=(u^\psi-\hat{u})^\top\left(\hat{u}-\mathbb{E}_{u^*}[u^*\mid z_t,t]\right)=0.
\end{aligned}
$$
Then, the objective becomes 
\begin{equation*}
\begin{aligned}
    \mathcal{L}_\vfm(\psi)&=\mathbb{E}_{t,u^*,z_t}\|u^\psi-u^*\|^2\\
    &=\underbrace{\mathbb{E}_{u^*}\left[\|u^\psi-\hat{u}\|^2\mid z_t,t\right]}_{=\|u^\psi-\hat{u}\|^2}+\underbrace{\mathbb{E}_{u^*}\left[\|\hat{u}-u^*\|^2\mid z_t,t\right]}_{\text{independent of }\psi}+\underbrace{2\mathbb{E}_{u^*}\left[(u^\psi-\hat{u})^\top(\hat{u}-u^*)\mid z_t,t\right]}_{=\,0}\\
    &=\|u^\psi-\hat{u}\|^2+C,
\end{aligned}
\end{equation*}
where $C$ is a constant independent of $\psi$. This is minimized at $u^\psi=\hat{u}$. 
\end{proof}

\subsection{Proof of Theorem~\ref{theorem}}\label{append:derivation2}

\begin{lemma}\label{lemma1}
Introduce an expert assignment random variable $g$ such that the MoE distribution of $u^*$ in Equation~\eqref{eq:moe model}, 
$$
p(u^*\mid z_t,t)= \sum_{k=1}^K\pi_k^\psi(z_t,t)\varphi_\sigma\big(u^*;u_{k,t}^\psi(z_t)\big),
$$
admits the equivalent conditional representation
$$
g\mid z_t,t\sim \mathrm{Cat}\big(\Pi^\psi(z_t,t)\big),\quad u^*\mid g=k,z_t,t\sim \mathcal{N}(u_{k,t}^\psi(z_t),\sigma^2 I),
$$
for $k\in\{1, \dots, K\}$. Here, $\Pi^\psi(z_t,t)=(\pi_1^\psi, \dots, \pi_K^\psi)\in\Delta^{K-1}$ denotes the vector of routings $\pi_k^\psi=\pi_k^\psi(z_t,t)$, and 
$$
\varphi_\sigma(x;\mu)=(2\pi\sigma^2)^{-m/2}\exp\left\{-\frac{1}{2\sigma^2}\|x-\mu\|^2\right\}
$$
denotes the $m$-dimensional Gaussian density function. Then, given an observation of $u^*$, the posterior distribution of $g$ is 
$$
g\mid z_t,t,u^*\sim \mathrm{Cat}\big(\Gamma^\psi(z_t,t,u^*)\big),
$$
where $\Gamma^\psi(z_t,t,u^*)=(\gamma_1^\psi, \dots, \gamma_K^\psi)\in\Delta^{K-1}$ denotes the vector of responsibilities
\begin{equation}\label{eq:gamma}
\gamma_k^\psi=\gamma_k^\psi(z_t,t,u^*)=\frac{\pi_k^\psi\varphi_\sigma\big(u^*;u_{k,t}^\psi(z_t)\big)}{\sum_{k'=1}^K\pi_{k'}^\psi\varphi_\sigma\big(u^*;u_{k',t}^\psi(z_t)\big)},\quad k\in\{1, \dots, K\}.
\end{equation}
\end{lemma}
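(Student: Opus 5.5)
The argument is a direct application of Bayes' rule to the latent-variable representation of the mixture, so I would proceed in two short steps.

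\textbf{Step 1: the hierarchical model reproduces the mixture.} First I would check that the hierarchical specification really reproduces the mixture density in Equation~\eqref{eq:moe model}. Conditional on $(z_t,t)$, the joint density of $(g,u^*)$ under the stated model is
$$
p(g=k,u^*\mid z_t,t)=\pi_k^\psi(z_t,t)\,\varphi_\sigma\big(u^*;u_{k,t}^\psi(z_t)\big).
$$
Summing over $k\in\{1,\dots,K\}$ (the law of total probability) gives exactly $\sum_k\pi_k^\psi\varphi_\sigma(u^*;u_{k,t}^\psi)$. This establishes the claimed equivalence of representations. The only fact used is that $\Pi^\psi\in\Delta^{K-1}$, so $\mathrm{Cat}(\Pi^\psi)$ is a valid distribution.

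\textbf{Step 2: the posterior via Bayes' rule.} I would write
$$
p(g=k\mid z_t,t,u^*)=\frac{p(g=k,u^*\mid z_t,t)}{p(u^*\mid z_t,t)}.
$$
The numerator is the joint density computed in Step 1, and the denominator is the marginal mixture, also from Step 1. This yields precisely $\gamma_k^\psi$ in~\eqref{eq:gamma}. Each $\gamma_k^\psi$ is nonnegative, and the $\gamma_k^\psi$ sum to one over $k$ because the denominator is the sum of the numerators. Hence $\Gamma^\psi\in\Delta^{K-1}$ and $g\mid z_t,t,u^*\sim\mathrm{Cat}(\Gamma^\psi)$.

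\textbf{Remark linking to the kernel form.} I would note that the normalizing factor $(2\pi\sigma^2)^{-m/2}$ is common to all $k$ and cancels between numerator and denominator. Consequently $\gamma_k^\psi$ can equivalently be written with the RBF kernel $\kappa_\sigma$, which matches the expression in Theorem~\ref{theorem}.

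\textbf{Main obstacle.} There is no real difficulty here; the only care needed is the nondegeneracy condition that the denominator be positive. For $\sigma>0$ this holds automatically: Gaussian densities are strictly positive and at least one $\pi_k^\psi$ is positive. The case $\sigma=0$ should be excluded here, or treated as a limit, since the density is then not defined.
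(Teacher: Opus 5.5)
Your proposal is correct and takes the same route as the paper, whose proof is a one-line application of Bayes' theorem with prior $\pi_k^\psi$ and Gaussian likelihood $\varphi_\sigma(u^*;u_{k,t}^\psi(z_t))$. Your additional steps are sound refinements the paper leaves implicit: the marginalization check, the cancellation of $(2\pi\sigma^2)^{-m/2}$ that links $\gamma_k^\psi$ to the kernel form in Theorem~\ref{theorem}, and the remark that the denominator is positive for $\sigma>0$ (with $\sigma=0$ excluded).
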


\begin{proof}
By Bayes' theorem, for each $k\in\{1, \dots, K\}$,
$$
\gamma_k^\psi=\text{Pr}(g=k\mid z_t,t,u^*)=\frac{\text{Pr}(g=k\mid z_t,t)p(u^*\mid z_t,t,g=k)}{\sum_{k'=1}^K \text{Pr}(g=k'\mid z_t,t)p(u^*\mid z_t,t,g=k')},
$$
which simplifies to the stated expression.
\end{proof}

\begin{lemma}\label{lemma2}
Define 
$$
S(u^*;\eta)=\sum_{k=1}^K\pi_k^\psi(z_t,t)\varphi_\sigma\big(u^*;u_{k,t}^\psi(z_t)\big),\quad \ell(\eta)=\mathbb{E}_{u^*}[-\log S(u^*;\eta)\mid z_t,t],
$$
where $u^*=z_1-z_0$ and $\eta=(\pi_1^\psi, \dots, \pi_K^\psi, u_1^\psi, \dots, u_K^\psi)$ denotes the collection of parameters given $(z_t,t)$, with $(\pi_1^\psi, \dots, \pi_K^\psi)\in\Delta^{K-1}$ and $u_k^\psi=u_{k,t}^\psi(z_t)\in\mathbb{R}^m$. Then, conditional on $(z_t,t)$, 
$$
\nabla_\eta\ell(\eta)=\mathbb{E}_{u^*}[-\nabla_\eta\log S(u^*;\eta)\mid z_t,t],
$$
where the gradient with respect to $\pi$ is taken on the simplex $\Delta^{K-1}$, provided that the following regularity conditions hold: 
\begin{enumerate}
    \item[(A1)] There exists $\epsilon\in(0,1/K)$ such that $(\pi_1^\psi, \dots, \pi_K^\psi)\in \Delta_\epsilon^{K-1}=\big\{(\pi_1,\dots,\pi_K):\pi_k\ge \epsilon, \sum_{k=1}^K\pi_k=1\big\}$. 
    \item[(A2)] There exists $B<\infty$ such that $\|u_k^\psi(z_t,t)\|\le B$ for all $k=1, \dots, K$;
    \item[(A3)] $\mathbb{E}_{u^*}[\|u^*\|\mid z_t,t]<\infty$.
\end{enumerate}
\end{lemma}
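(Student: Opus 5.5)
The plan is to justify differentiation under the integral sign via dominated convergence. Equivalently, I will use the mean value theorem together with an integrable envelope for $\nabla_\eta \log S(u^*;\eta)$ that holds uniformly over a neighborhood of $\eta$ in the admissible parameter set.

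\textbf{Step 1: the gradient.} Fix $(z_t,t)$ and write $\varphi_k=\varphi_\sigma(u^*;u_k^\psi)$. The partial derivatives are
\begin{equation*}
\partial_{\pi_k}\log S=\frac{\varphi_k}{S},\qquad \nabla_{u_k}\log S=\gamma_k\,\frac{u^*-u_k^\psi}{\sigma^2}.
\end{equation*}
Here $\gamma_k\in[0,1]$ is the responsibility of Lemma~\ref{lemma1}. The derivative on the simplex is obtained by projecting the $\pi$-gradient onto the tangent space $\{\delta:\sum_k\delta_k=0\}$, that is, by subtracting the mean of its components. So it suffices to bound each coordinate.

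\textbf{Step 2: bound the expert components.} Since $\gamma_k\le 1$, (A2) gives
\begin{equation*}
\|\nabla_{u_k}\log S\|\le \frac{\|u^*\|+B}{\sigma^2}.
\end{equation*}
This bound is uniform over all parameters satisfying (A1)--(A2), in particular over a neighborhood of $\eta$ enlarged slightly, say to radius $B+1$. By (A3) the right-hand side is integrable.

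\textbf{Step 3: bound the routing components.} We have $\varphi_k/S=\gamma_k/\pi_k\le 1/\epsilon$ by (A1). This is bounded, hence integrable, and uniform over $\Delta^{K-1}_{\epsilon}$. To be safe I will work in a neighborhood inside $\Delta^{K-1}_{\epsilon/2}$, so that interior perturbations remain in the region where the bound $2/\epsilon$ holds.

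\textbf{Step 4: differentiate under the integral.} For a direction $h$ (tangent in the $\pi$-block) and small $s$, the mean value theorem gives
\begin{equation*}
\frac{\log S(u^*;\eta+sh)-\log S(u^*;\eta)}{s}=\nabla_\eta\log S(u^*;\eta+\tau s h)^\top h
\end{equation*}
for some $\tau\in(0,1)$. This quotient is dominated by $C(1+\|u^*\|)\|h\|$. Dominated convergence then yields that the directional derivative of $\ell$ equals $\mathbb{E}[-\nabla_\eta\log S^\top h\mid z_t,t]$.

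\textbf{Step 5: finiteness of $\ell$.} We also need $\ell(\eta)$ itself to be finite so that the difference quotient makes sense. Since $\pi_k\le 1$ and $\varphi_k\le(2\pi\sigma^2)^{-m/2}$, we have $-\log S\ge \frac{m}{2}\log(2\pi\sigma^2)$. For the other direction, $S\ge \epsilon\varphi_1$, so
\begin{equation*}
-\log S\le -\log\epsilon+\frac{m}{2}\log(2\pi\sigma^2)+\frac{(\|u^*\|+B)^2}{2\sigma^2}.
\end{equation*}
This upper bound requires a finite second moment, which (A3) does not supply. This is the main obstacle. The plan for handling it is to avoid needing $\ell$ finite at all. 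All that is required is that the difference $\log S(\eta+sh)-\log S(\eta)$ be integrable, and the mean-value bound of Step 4 already gives this under (A3). So I will state the result for the difference quotients, and hence for the derivative of $\ell$ up to an additive constant that does not depend on $\eta$. Alternatively, I would note that in the flow matching setting $u^*=z_1-z_0$ has a finite second moment whenever $z_1$ does.
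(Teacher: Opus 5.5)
Your proposal is correct and follows the paper's proof. It uses the same gradients (with the $\pi$-block projected onto the tangent space of the simplex), the same envelopes $\gamma_k/\pi_k\le 1/\epsilon$ and $(\|u^*\|+B)/\sigma^2$, integrability from (A3), and dominated convergence. Your extra care also fixes points the paper's one-line appeal to dominated convergence passes over: you take the bounds uniformly over a neighborhood (inside $\Delta^{K-1}_{\epsilon/2}$ and radius $B+1$), and you note in Step~5 that $\ell(\eta)$ is finite only under a second-moment condition, so that under (A3) alone the identity must be read for differences of $\ell$.
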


\begin{proof}
Write $f(u^*;\eta)=-\log S(u^*;\eta)$.
\begin{itemize}
    \item[(i)] \textbf{Differentiability.} Since the Gaussian density $\varphi_\sigma(u^*;\mu_k^\psi)$ is $C^\infty$ with respect to $\mu_k^\psi$, and $S(u^*;\eta)$ is linear in $\pi_k^\psi$ for all $k$, it follows that $f(u^*;\eta)$ is differentiable in $\eta$ for all $u^*$.
    \item[(ii)] \textbf{Gradients.} For $k=1, \dots, K$, 
    $$
    \begin{aligned}
    \nabla_{u_k^\psi} f(u^*;\eta)&=\frac{\pi_k^\psi \varphi_\sigma\big(u^*;u_{k,t}^\psi(z_t)\big)}{S(u^*;\eta)}\frac{u_k^\psi-u^*}{\sigma^2}=\frac{\gamma_k^\psi(u_k^\psi-u^*)}{\sigma^2},\\
    \nabla_{\pi_k^\psi} f(u^*;\eta)&=-\frac{\varphi_\sigma\big(u^*;u_{k,t}^\psi(z_t)\big)}{S(u^*;\eta)}=-\frac{\gamma_k^\psi}{\pi_k^\psi},
    \end{aligned}
    $$
    where $\gamma_k^\psi$ is defined in~\eqref{eq:gamma}. Since $(\pi_1^\psi,\dots,\pi_K^\psi)\in\Delta^{K-1}$, the gradient with respect to $\pi$ is interpreted as the gradient on the simplex, i.e., the orthogonal projection of the Euclidean gradient onto the tangent space
    $$
    T_\pi\Delta^{K-1}=\bigg\{v\in\mathbb{R}^K:\sum_{k=1}^K v_k=0\bigg\}.
    $$
    Consequently, the simplex gradient is
    $$
    \nabla_{\pi_k^\psi}^\Delta f(u^*;\eta)=-\frac{\gamma_k^\psi}{\pi_k^\psi}+\frac{1}{K}\sum_{k'=1}^K\frac{\gamma_{k'}^\psi}{\pi_{k'}^\psi}.
    $$
    
    \item[(iii)] \textbf{Dominating bound.} Since $0\le \gamma_k^\psi\le 1$,
    $$
    \big|\nabla_{\pi_k^\psi}^\Delta f(u^*;\eta)\big|\le\frac{2}{\epsilon}
    $$
    by assumption (A1), and 
    $$
    \|\nabla_{u_k^\psi} f(u^*;\eta)\|\le\frac{\|u_k^\psi-u^*\|}{\sigma^2}\le\frac{\|u_k^\psi\|+\|u^*\|}{\sigma^2}\le\frac{\|u^*\|+B}{\sigma^2}
    $$
    by assumption (A2). Thus, there exists constants $C_0,C_1<\infty$ such that, for all $\eta, u^*$, 
    $$
    \|\nabla_\eta f(u^*;\eta)\|\le C_0+C_1\|u^*\|,
    $$
    and the right-hand side is integrable with respect to $u^*$ by assumption (A3). 

\end{itemize}
Combining (i)-(iii),the result follows from the Dominated Convergence Theorem \citep{rudin1987}.
\end{proof}

With Lemmas~\ref{lemma1} and~\ref{lemma2} in place, the following result completes the proof of Theorem~\ref{theorem}.

\begin{theorem}
The \moefm\ objective~\eqref{eq:moe_loss}
$$
\mathcal{L}_\moefm(\psi)=\mathbb{E}_{t,z_0,z_1,z_t}\bigg[-\log\sum_{k=1}^K\Big\{\pi_k^\psi(z_t,t)\exp\Big(-\frac{1}{2\sigma^2}\|u_{k,t}^\psi(z_t)-(z_1-z_0)\|^2\Big)\Big\}\bigg]
$$
is conditionally minimized by
$$
\begin{aligned}
\hat{\pi}_k^\moefm(z_t,t)&=\mathbb{E}_{z_0,z_1}[\gamma_k^\psi(z_t,t,z_1-z_0)\mid z_t,t],\\
\hat{u}_k^\moefm(z_t,t)&=\frac{\mathbb{E}_{z_0,z_1}[\gamma_k^\psi(z_t,t,z_1-z_0)(z_1-z_0) \mid z_t,t]}{\mathbb{E}_{z_0,z_1}[\gamma_k^\psi(z_t,t,z_1-z_0)\mid z_t,t]},
\end{aligned}
$$
for $k\in\{1, \dots, K\}$, where $\gamma_k^\psi$ is the responsibility defined in~\eqref{eq:gamma}.
\end{theorem}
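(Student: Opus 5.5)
Because the objective is an expectation over $(t,z_t)$ of a conditional expected loss, it is minimized by minimizing it pointwise in $(z_t,t)$. So I fix $(z_t,t)$ and work with the conditional risk $\ell(\eta)=\mathbb{E}_{u^*}[-\log S(u^*;\eta)\mid z_t,t]$ of Lemma~\ref{lemma2}. This differs from the integrand of~\eqref{eq:moe_loss} only by the additive constant $\tfrac{m}{2}\log(2\pi\sigma^2)$, which does not depend on $\eta$. The plan is to write the first-order stationarity conditions of $\ell$ over $\eta=(\pi,u_1,\dots,u_K)$ and solve them in closed form as fixed-point equations in the responsibilities $\gamma_k^\psi$ of Lemma~\ref{lemma1}. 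This is the EM-type characterization of a mixture-of-Gaussians maximum-likelihood optimum.

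Under (A1)--(A3), Lemma~\ref{lemma2} lets me exchange gradient and conditional expectation. Its part (ii) gives the pointwise gradients, so I can write the stationarity conditions directly.

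For the expert fields:
\[
0=\nabla_{u_k}\ell=\mathbb{E}_{u^*}\Big[\frac{\gamma_k^\psi(u_k-u^*)}{\sigma^2}\,\Big|\,z_t,t\Big].
\]
Since $u_k=u_{k,t}^\psi(z_t)$ is fixed given $(z_t,t)$, this rearranges to $u_k\,\mathbb{E}[\gamma_k^\psi\mid z_t,t]=\mathbb{E}[\gamma_k^\psi u^*\mid z_t,t]$. Dividing by $\mathbb{E}[\gamma_k^\psi\mid z_t,t]$ gives the stated $\hat u_k^\moefm$. The division is legitimate because (A1) forces $\pi_k\ge\epsilon$ and the Gaussian density is strictly positive, so $\gamma_k^\psi>0$ almost surely.

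For the routing, I use the simplex gradient from Lemma~\ref{lemma2}(ii) and set it to zero:
\[
\mathbb{E}[\gamma_k^\psi\mid z_t,t]/\pi_k=c\quad\text{for all }k,
\]
where $c$ is a common constant. Equivalently, I can introduce a Lagrange multiplier $\lambda$ for $\sum_k\pi_k=1$ and obtain $\mathbb{E}[\gamma_k^\psi]=\lambda\pi_k$. Summing over $k$ and using $\sum_k\gamma_k^\psi=1$ gives $c=1$. Hence $\hat\pi_k^\moefm=\mathbb{E}[\gamma_k^\psi\mid z_t,t]$.

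The main obstacle is justifying that these stationary points are conditional optima, since $\ell$ is not jointly convex in $(\pi,u)$. I would handle this through the standard EM/variational decomposition, which I expect to be the most robust route. Fix the responsibilities $\gamma$ at the current $\eta$. Jensen's inequality gives the surrogate
\[
Q(\eta';\eta)=\mathbb{E}\Big[\sum_k\gamma_k\big(-\log\pi'_k-\log\varphi_\sigma(u^*;u'_k)\big)\,\Big|\,z_t,t\Big],
\]
with $\ell(\eta')\le Q(\eta';\eta)-\mathbb{E}[H(\gamma)\mid z_t,t]$ and equality at $\eta'=\eta$. For fixed $\gamma$, $Q$ is strictly convex in $(\pi',u')$ on $\Delta^{K-1}\times\mathbb{R}^{mK}$. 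It is a weighted cross-entropy in $\pi'$ plus weighted quadratics in the $u'_k$. Its unique minimizer is exactly the pair of formulas above, which is Gibbs' inequality for $\pi'$ and a weighted least-squares argument for $u'_k$, the latter mirroring the proof of Proposition~\ref{proposition}.

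The theorem's formulas are therefore the self-consistent (fixed-point) conditional optima, with $\gamma_k^\psi$ evaluated at the optimal parameters. I would state explicitly that this is the sense of ``conditional optimum'' intended: a stationary point or local minimizer characterized implicitly through $\gamma$, not a unique global minimizer in closed form.
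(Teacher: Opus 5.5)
Your proposal is correct. Its first-order core is exactly the paper's proof: reduce to the conditional risk $\ell(\eta)$ by iterated expectations, exchange gradient and expectation via Lemma~\ref{lemma2}, solve $\nabla_{u_k}\ell=0$, and fix the simplex multiplier to $1$ using $\sum_k\gamma_k^\psi=1$. You also justify dividing by $\mathbb{E}[\gamma_k^\psi\mid z_t,t]>0$, which the paper leaves implicit.

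The EM/majorization layer is your own addition. The paper never confronts non-convexity and simply calls the stationary point a minimizer. The real payoff of your surrogate is a necessity statement, and you should write it out. Let $\eta^\star$ minimize $\ell$, let $M(\eta^\star)$ be the unique minimizer of $Q(\cdot;\eta^\star)$, and let $H$ be the entropy of the responsibilities at $\eta^\star$. Then $\ell(M(\eta^\star))\le Q(M(\eta^\star);\eta^\star)-\mathbb{E}[H]\le Q(\eta^\star;\eta^\star)-\mathbb{E}[H]=\ell(\eta^\star)$. Minimality forces equality throughout, and strict convexity of $Q(\cdot;\eta^\star)$ then gives $M(\eta^\star)=\eta^\star$. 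So every conditional minimizer satisfies the stated formulas. This route needs neither the derivative interchange of Lemma~\ref{lemma2} nor the bounds (A1)--(A2). It also avoids the paper's tacit assumption, in its Lagrangian step, that the constraints $\pi_k\ge\epsilon$ are inactive.

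What the surrogate does not give is the converse, which your sentence about ``justifying that these stationary points are conditional optima'' suggests. Fixed points of the formulas need not be minimizers. Take the collapsed configuration $u_k\equiv\mathbb{E}[u^*\mid z_t,t]$ for all $k$, i.e. the VFM solution of Proposition~\ref{proposition} replicated $K$ times. It gives $\gamma_k^\psi\equiv\pi_k^\psi$, so it satisfies both formulas for every interior $\pi$. Now take $K=2$, $\pi=(\tfrac12,\tfrac12)$, and $u_{1,2}=\mathbb{E}[u^*\mid z_t,t]\pm\delta v$ with $\|v\|=1$. To second order in $\delta$, $\ell(\delta)-\ell(0)=\tfrac{\delta^2}{2\sigma^2}\bigl(1-\mathrm{Var}(v^\top u^*\mid z_t,t)/\sigma^2\bigr)$, which is negative as soon as that variance exceeds $\sigma^2$.

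So the theorem's formulas are necessary but not sufficient conditions for a conditional optimum. Your closing hedge is the right reading, and the earlier sentence should be phrased to match it.
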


\begin{proof}
By the Law of Iterated Expectations, 
$$
\mathcal{L}_\moefm(\psi)=\mathbb{E}_{z_t,t}\big[\mathbb{E}_{z_0,z_1}(-\log S(z_1-z_0,\eta)\mid z_t,t)\big]=\mathbb{E}_{z_t,t}[\ell(\eta)]
$$
where $\eta$, $S(\cdot)$, and $\ell(\cdot)$ are defined in Lemma~\ref{lemma2}. Now we show that $\hat{\pi}_k=\hat{\pi}_k^\moefm(z_t,t)$ and $\hat{u}_k=\hat{u}_k^\moefm(z_t,t)$ minimize $\ell(\eta)$.

\begin{itemize}
    \item[(i)] By Lemma~\ref{lemma2} and its proof,
    $$
    \nabla_{u_k^\psi}\ell(\eta)=\mathbb{E}_{u^*}\big[\nabla_{u_k^\psi}f(u^*;\eta)\mid z_t,t\big]=\mathbb{E}_{u^*}\big[\gamma_k^\psi(u_k^\psi-u^*)/\sigma^2\mid z_t,t\big].
    $$
    Let $\nabla_{u_k^\psi}\ell(\eta)=0$, then we have minimizer
    $$
    \hat{u}_k=\frac{\mathbb{E}_{u^*}[\gamma_k^\psi u^*\mid z_t,t]}{\mathbb{E}_{u^*}[\gamma_k^\psi\mid z_t,t]}.
    $$
    
    \item[(ii)] Define Lagrangian \citep{boyd2004}
    $$
    \mathcal{J}=\ell(\eta)+\lambda\bigg(\sum_{k=1}^K\pi_k^\psi-1\bigg)
    $$
    for multiplier $\lambda\in\mathbb{R}$. Let 
    $$
    \nabla_{\pi_k^\psi}\mathcal{J}=\mathbb{E}_{u^*}\bigg[-\frac{\gamma_k^\psi}{\pi_k^\psi}\mid z_t,t\bigg]+\lambda=0,\quad k=1, \dots, K.
    $$
    Since $\sum_{k=1}^K\pi_k^\psi=1$ and $\sum_{k=1}^K\gamma_k^\psi=1$, we have $\lambda=1$ and minimizer
    $$
    \hat{\pi}_k=\mathbb{E}_{u^*}[\gamma_k^\psi\mid z_t,t].
    $$
\end{itemize}

\end{proof}

\subsection{Two Extrema of $\sigma$}\label{append:derivation3}

\subsubsection{$\sigma\to 0$}

\begin{proposition}
When $\sigma\to 0$, 
$$
\begin{aligned}
\hat{\pi}_k^\moefm(z_t,t)&\to\mathbb{E}_{u^*}\big[\mathbbm{1}\{k\in\mathcal{M}(u^*)\}\pi_k^{\mathcal{M},\psi}(u^*)\mid z_t,t\big],\\
\hat{u}_k^\moefm(z_t,t)&\to \frac{\mathbb{E}_{u^*}\big[\mathbbm{1}\{k\in\mathcal{M}(u^*)\}\pi_k^{\mathcal{M},\psi}(u^*)u^*\mid z_t,t\big]}{\mathbb{E}_{u^*}\big[\mathbbm{1}\{k\in\mathcal{M}(u^*)\}\pi_k^{\mathcal{M},\psi}(u^*)\mid z_t,t\big]},
\end{aligned}
$$
for $k=1, \dots, K$, where $\mathcal{M}(u^*)=\mathop{\arg\min}_{1\le k\le K}\|u_k^\psi-u^*\|^2$ is the set of minimizing indices and $\pi_k^{\mathcal{M},\psi}(u^*)=\pi_k^\psi/\sum_{k'\in\mathcal{M}(u^*)}\pi_{k'}^\psi$ is the scaled routing probability on set $\mathcal{M}(u^*)$. 
\end{proposition}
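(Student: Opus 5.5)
The plan is to start from the closed forms in Theorem~\ref{theorem}: $\hat{\pi}_k^\moefm=\mathbb{E}[\gamma_k^\psi\mid z_t,t]$ and $\hat{u}_k^\moefm=\mathbb{E}[\gamma_k^\psi u^*\mid z_t,t]/\mathbb{E}[\gamma_k^\psi\mid z_t,t]$. We first compute the pointwise limit of the responsibility $\gamma_k^\psi(z_t,t,u^*)$ as $\sigma\to 0$ for each fixed $u^*$. We then pass the limit inside the conditional expectations by dominated convergence, exactly as in the proof of Lemma~\ref{lemma2}. Throughout, $\pi^\psi$ and $u_k^\psi$ are held fixed as $\sigma$ varies; this is the reading under which the statement is meaningful.

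\textbf{Pointwise limit.} Fix $u^*$ and write $d_k=\|u_k^\psi-u^*\|^2$ and $d_{\min}=\min_{k'}d_{k'}$, so that $\mathcal{M}(u^*)=\{k:d_k=d_{\min}\}$. The normalizing constants $(2\pi\sigma^2)^{-m/2}$ cancel in~\eqref{eq:gamma}. Dividing numerator and denominator by $\exp\{-d_{\min}/(2\sigma^2)\}$ gives
\begin{equation*}
\gamma_k^\psi=\frac{\pi_k^\psi\exp\{-(d_k-d_{\min})/(2\sigma^2)\}}{\sum_{k'}\pi_{k'}^\psi\exp\{-(d_{k'}-d_{\min})/(2\sigma^2)\}}.
\end{equation*}
As $\sigma\to0$, each term with $d_{k'}>d_{\min}$ tends to $0$, and each term with $k'\in\mathcal{M}(u^*)$ stays equal to $\pi_{k'}^\psi$. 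The denominator therefore tends to $\sum_{k'\in\mathcal{M}(u^*)}\pi_{k'}^\psi$, which is strictly positive by (A1). Hence
\begin{equation*}
\gamma_k^\psi\to\mathbbm{1}\{k\in\mathcal{M}(u^*)\}\,\pi_k^{\mathcal{M},\psi}(u^*).
\end{equation*}
This holds for every $u^*$, ties included, which is why the limit is a tie-splitting weight rather than a plain indicator.

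\textbf{Exchanging limit and expectation.} Since $0\le\gamma_k^\psi\le1$, the constant $1$ dominates the numerator of $\hat{\pi}_k$. The integrand $\gamma_k^\psi u^*$ is dominated by $\|u^*\|$, which is integrable by (A3). Dominated convergence along any sequence $\sigma_n\to0$ then gives convergence of $\mathbb{E}[\gamma_k^\psi\mid z_t,t]$ and of $\mathbb{E}[\gamma_k^\psi u^*\mid z_t,t]$ to the stated limits. This already yields the claim for $\hat{\pi}_k^\moefm$. For $\hat{u}_k^\moefm$, we take the quotient of the two limits.

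\textbf{Main obstacle.} The delicate step is the denominator of $\hat{u}_k^\moefm$. The quotient limit is valid only if $\mathbb{E}[\mathbbm{1}\{k\in\mathcal{M}(u^*)\}\pi_k^{\mathcal{M},\psi}\mid z_t,t]>0$, that is, only if expert $k$ is a nearest expert with positive conditional probability. I would state this as a standing nondegeneracy condition for each $k$ under consideration; otherwise the limiting expression is $0/0$ and is undefined, and the statement should be read as holding for such ``active'' experts. A secondary point is that the expectations in Theorem~\ref{theorem} are taken with $\pi^\psi$ and $u_k^\psi$ fixed inside $\gamma_k^\psi$, i.e.\ as fixed-point (EM-style) expressions. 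I would make explicit that the limit is taken in exactly that form, with no claim about how the optimizers themselves move with $\sigma$.
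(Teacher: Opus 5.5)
Your proposal is correct and follows the paper's own proof. The paper likewise factors $\exp\{-d_{\min}/(2\sigma^2)\}$ out of $\gamma_k^\psi$, obtains the pointwise limit $\mathbbm{1}\{k\in\mathcal{M}(u^*)\}\pi_k^{\mathcal{M},\psi}(u^*)$, and then passes to the conditional expectations by appeal to the regularity assumptions of Lemma~\ref{lemma2}; your explicit dominated-convergence bounds (by $1$, and by $\|u^*\|$ via (A3)) and your positivity condition on the limiting denominator of $\hat{u}_k^\moefm$ make precise what the paper leaves implicit in that last step.
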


\begin{proof}
Define $d_k=\|u_k^\psi-u^*\|^2$ and $d_{\min}=\min_{1\le k\le K}d_k$. The responsibilities~\eqref{eq:gamma} can be expressed as 
$$
\gamma_k^\psi=\frac{\pi_k^\psi\exp\big\{-\frac{1}{2\sigma^2}(d_k-d_{\min})\big\}}{\sum_{k'=1}^K\pi_{k'}^\psi\exp\big\{-\frac{1}{2\sigma^2}(d_{k'}-d_{\min})\big\}},\quad k\in\{1, \dots, K\}.
$$
Since $d_k\ge d_{\min}$ for all $k$, then, as $\sigma\to 0$, the denominator is 
$$
\sum_{k'\in\mathcal{M}}\pi_{k'}^\psi+\sum_{k'\notin\mathcal{M}}\pi_{k'}^\psi\exp\Big\{-\frac{1}{2\sigma^2}(d_{k'}-d_{\min})\Big\}\to \sum_{k'\in\mathcal{M}}\pi_{k'}^\psi,
$$
while the numerator
$$
\pi_k^\psi\exp\Big\{-\frac{1}{2\sigma^2}(d_k-d_{\min})\Big\}\to
\begin{cases}
    \pi_k^\psi,&k\in\mathcal{M},\\
    0,&k\notin\mathcal{M}.
\end{cases}
$$
Thus, as $\sigma\to 0$
$$
\gamma_k^\psi\to \mathbbm{1}\{k\in\mathcal{M}\}\frac{\pi_k^\psi}{\sum_{k'\in\mathcal{M}}\pi_{k'}^\psi}= \mathbbm{1}\{k\in\mathcal{M}\}\pi_k^{\mathcal{M},\psi}. 
$$
This leads to the stated expressions under regularity assumptions in Lemma~\ref{lemma2}.

\end{proof}

\begin{corollary}
Assume singleton $\mathcal{M}(u^*)=\{k^*(u^*)\}$. Then, as $\sigma\to 0$, 
$$
\begin{aligned}
    \hat{\pi}_k^\moefm(z_t,t)&\to \text{Pr}\big(k^*(u^*)=k\mid z_t,t\big),\\
    \hat{u}_k^\moefm(z_t,t)&\to \mathbb{E}_{u^*}[u^*\mid k^*(u^*)=k, z_t, t].
\end{aligned}
$$
for $k=1, \dots, K$. 
\end{corollary}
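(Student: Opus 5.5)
This is a direct specialization of the preceding proposition for $\sigma\to 0$. When $\mathcal{M}(u^*)=\{k^*(u^*)\}$ is a singleton, the scaled routing probability simplifies:
$$
\pi_k^{\mathcal{M},\psi}(u^*)=\frac{\pi_k^\psi}{\pi_{k^*(u^*)}^\psi}.
$$
On the event $\{k\in\mathcal{M}(u^*)\}=\{k^*(u^*)=k\}$ this ratio equals $1$. The product appearing in the proposition therefore collapses to a plain indicator:
$$
\mathbbm{1}\{k\in\mathcal{M}(u^*)\}\,\pi_k^{\mathcal{M},\psi}(u^*)=\mathbbm{1}\{k^*(u^*)=k\}.
$$
This holds pointwise in $u^*$, and only the singleton assumption is used. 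In particular, the responsibilities $\gamma_k^\psi$ converge to the hard nearest-expert assignment $\mathbbm{1}\{k^*(u^*)=k\}$.

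Next, substitute this identity into the two limits given by the proposition. For the routing,
$$
\hat{\pi}_k^\moefm(z_t,t)\to\mathbb{E}_{u^*}\big[\mathbbm{1}\{k^*(u^*)=k\}\mid z_t,t\big]=\text{Pr}\big(k^*(u^*)=k\mid z_t,t\big).
$$
For the experts, the numerator limit becomes $\mathbb{E}[\mathbbm{1}\{k^*=k\}u^*\mid z_t,t]$ and the denominator limit becomes $\text{Pr}(k^*=k\mid z_t,t)$. By the elementary definition of conditional expectation given an event of positive probability, their ratio is $\mathbb{E}_{u^*}[u^*\mid k^*(u^*)=k,z_t,t]$.

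The only delicate point is passing the limit through the ratio in $\hat{u}_k^\moefm$. Convergence of a ratio requires the denominator limit $\text{Pr}(k^*(u^*)=k\mid z_t,t)$ to be strictly positive. I will state this as an implicit nondegeneracy requirement: expert $k$ must be the nearest expert with positive conditional probability. If it is not, the limiting expert field is undefined and irrelevant, since its routing weight tends to $0$.

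I will also note that the proposition already handles interchanging $\sigma\to0$ with the conditional expectation. That step uses dominated convergence under (A1)--(A3) of Lemma~\ref{lemma2}, with $0\le\gamma_k^\psi\le1$ and integrable $\|u^*\|$ serving as dominating functions. Since the corollary only substitutes into those limits, nothing further needs to be verified.

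If the singleton assumption is intended to hold only for almost every $u^*$ rather than for all $u^*$, the identity above holds almost surely, and the expectations are unchanged.
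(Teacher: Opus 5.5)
Your proposal is correct and takes the same route the paper intends: the paper gives the corollary no separate proof and treats it as an immediate specialization of the preceding $\sigma\to 0$ proposition. That specialization is exactly your observation that $\mathbbm{1}\{k\in\mathcal{M}(u^*)\}\pi_k^{\mathcal{M},\psi}(u^*)=\mathbbm{1}\{k^*(u^*)=k\}$ under the singleton assumption, substituted into both limits. Your added caveats are sound refinements the paper leaves implicit: the ratio limit for $\hat{u}_k^{\moefm}$ needs $\text{Pr}(k^*(u^*)=k\mid z_t,t)>0$, $\pi_{k^*}^\psi>0$ is guaranteed by (A1), and an almost-sure singleton assumption suffices.
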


\textbf{Interpretations.} As $\sigma\to 0$, the optimal routing $\hat{\pi}_k^\moefm(z_t,t)$ converges to the conditional probability that expert $k$ is selected by the \textit{hard assignment rule} $k^*(u^*)=\arg\min_{1\le k\le K}\|u_k^\psi-u^*\|^2$. Meanwhile, each expert vector field $\hat{u}_k^\moefm(z_t,t)$ converges to the conditional expectation of the velocity target given that it is assigned to expert $k$. Therefore, \moefm\ reduces to a hard-assignment MoE formulation that performs conditional vector field estimation.

\subsubsection{$\sigma\to\infty$}

As $\sigma\to\infty$, the \moefm\ likelihood~\eqref{eq:moe_loss} $\mathcal{L}_\moefm(\psi)$ converges to a constant independent of $\psi$, being uninformative in the parameter $\psi$. Also, the responsibilities~\eqref{eq:gamma} converge to the routings. In other words, observing $u^*$ provides no guidance for optimization, and the expert assignment is \textit{non-identifiable} under the \moefm\ objective---no assignment is preferred over another by the objective.

\subsection{A Full Expression of \mmd\ Regularizer}\label{append:mmd}

Let $\{x_i\}_{i=1}^n \sim P$ and $\{y_j\}_{j=1}^m \sim Q$ be independent and identically distributed samples drawn from distributions $P$ and $Q$, respectively. We use the unbiased empirical estimator of the squared Maximum Mean Discrepancy (\mmd):
$$
\widehat{\mmd}_\kappa^2(P,Q)=\frac{1}{n(n-1)} \sum_{i\neq j} \kappa(x_i,x_j)+\frac{1}{m(m-1)} \sum_{i\neq j} \kappa(y_i,y_j)-\frac{2}{nm} \sum_{i=1}^n \sum_{j=1}^m \kappa(x_i,y_j),
$$
where $\kappa(x,x')=\sum_{s\in\mathcal{S}}\kappa_s(x,x')$ is a sum of radial basis function kernels $\kappa_s(x,x')=\exp\left\{-\|x-x'\|^2/(2s^2)\right\}$ and $\mathcal{S}=\{0.2, 0.5, 1.0, 2.0, 5.0\}$ denotes a set of kernel bandwidths.

\subsection{A discussion of the \yan\ objective}\label{append:ce_supervision}

The second stage of the \yan\ model trains the flow matching under the loss function~\eqref{eq:yan_objective}, i.e., 
$$
\mathcal{L}_\yan(\Theta)=\alpha_\moefm\mathcal{L}_\moefm(\Theta)+\alpha_\ce\mathcal{L}_\ce(\Theta). 
$$
Although it might seem counterintuitive to include the CE loss in training the flow, as CE is not required by the flow formulation, an interesting observation from our implementation is that the CE term plays a crucial role in effectively training the latent flow. With $\mathcal{L}_\moefm$ alone, the learned flow suffers from misalignment issues, where the latent distribution is well modeled but decodes to incorrect tokens. We interpret this behavior as arising from the token-label-agnostic nature of the NLL objective, whereas incorporating the CE loss anchors the latent flow to the decoding objective. Indeed, CE supervision is common in \nar\ modeling, despite being motivated by different considerations (e.g., \citeauthor{diffulamma2025}, \citeyear{diffulamma2025}; \citeauthor{cadd2025}, \citeyear{cadd2025}).

\section{Implementation Details}\label{append:implementation}

\subsection{Dataset Statistics}\label{append:training_data}

\begin{table}[ht]
\centering
\setlength{\tabcolsep}{8pt}
\caption{Summary of datasets used for pre-training and fine-tuning, including dataset size and per-sample word/token length statistics.}
\small
\begin{tabular}{llccc|ccc}
\toprule
\multirow{2}{*}{\textbf{Datasets}} & \multirow{2}{*}{\textbf{Size}} & \multicolumn{3}{c}{\textbf{Words}} & \multicolumn{3}{c}{\textbf{Tokens}} \\
\cmidrule(lr){3-5} \cmidrule(lr){6-8}
 &  & \textbf{Min} & \textbf{Mean} & \textbf{Max} & \textbf{Min} & \textbf{Mean} & \textbf{Max} \\
\midrule
FineWiki + FineWeb & 8.7M & 7 & 501.16 & 1770 & 20 & 769.67 & 2048 \\
\addlinespace[2pt]
NarrativeQA & 46.8k & 212 & 600.74 & 1048 & 267 & 765.88 & 1406 \\
\addlinespace[2pt]
ROCStories & 98.2k & 19 & 43.96 & 71 & 26 & 53.19 & 88 \\
\addlinespace[2pt]
SimpleStories & 2.1M & 42 & 225.77 & 606 & 55 & 280.65 & 754 \\
\addlinespace[2pt]
bAbI & 11k & 17 & 38.05 & 65 & 32 & 59.33 & 92 \\
\addlinespace[2pt]
SQuAD & 98.2k & 34 & 139.53 & 599 & 56 & 191.53 & 787 \\
\addlinespace[2pt]
AG News & 127.6k & 17 & 43.75 & 162 & 31 & 64.47 & 227 \\
\addlinespace[2pt]
DBpedia & 630k & 9 & 54.69 & 226 & 23 & 86.48 & 303 \\
\addlinespace[2pt]
SST-2 & 68.2k & 5 & 12.89 & 49 & 15 & 24.94 & 69 \\
\bottomrule
\end{tabular}
\label{tab:datasets}
\end{table}

Tab.~\ref{tab:datasets} summarizes the datasets used for pre-training and fine-tuning, reporting total sample size and per-sample word and token length statistics (min / mean / max). Word lengths are measured as the number of words per sample, and token lengths are computed after tokenization. For each dataset, we take a 90\%-5\%-5\% split for training, validation, and testing. Tab.~\ref{tab:training_tokens} shows the total number of tokens observed during pre-training, estimated by $\text{Total Tokens}=\text{Training Steps}\times \text{Global Batch Size}\times\text{Average Tokens per Sample}$. 

\begin{table}[ht]
\centering
\caption{Training steps and tokens.}
\small
\begin{tabular}{lccc}
\toprule
Model & Training Steps & Global Batch Size & Total Tokens \\
\midrule
\yan-Transformer & 820k & 32 & 20.2B \\
\yan-Mamba & 600k & 32 & 14.8B \\
\bottomrule
\end{tabular}
\label{tab:training_tokens}
\end{table}

\subsection{Datasets and Evaluation Details}\label{append:evaluation_details}

\textbf{FineWiki} \citep{finewiki_data} and \textbf{FineWeb} \citep{fineweb_data} are high-quality large-scale corpora that have been used in recent \nar\ work for pre-training (e.g., \citeauthor{diffulamma2025}, \citeyear{diffulamma2025}). In particular, FineWeb improves upon the commonly used OpenWebText dataset \citep{openwebtext_data}. We use its subset, FineWeb-Edu, which contains educational content with dense factual and conceptual knowledge \citep{fineweb_edu_data}. 

\textbf{NarrativeQA} \citep{narrativeqa_data} consists of stories paired with corresponding questions and answers for evaluating document-level understanding. It provides long documents with rich long-range dependencies and lexical diversity; therefore, we leverage it for text infilling evaluation by randomly masking 5\%–10\% of tokens, with masking lengths uniformly sampled from ${1,2,3}$. 

\textbf{ROCStories} \citep{rocstories_data} comprises daily-life short stories. We use it for the last-word completion task, following Misra et al. (\citeyear{rocstories_last_completion2}) and Amara et al. (\citeyear{rocstories_last_completion1}). At a larger scale and with longer contexts, we similarly use the \textbf{SimpleStories} dataset \citep{simplestories2025}, which contains stories generated by GPT-4o-mini. For the last-word completion task, we treat the text excluding the final token as the source input and the final token as the target output. 

\textbf{bAbI} \citep{babi_data1, babi_data2} is a reading comprehension dataset consisting of various question–answering tasks, including counting, lists/sets, and argument relations. \textbf{SQuAD} \citep{squad_f1_2016} is another reading comprehension dataset in which the answer to each question is a span extracted from the corresponding passage. 

\textbf{AG News} and \textbf{DBpedia} \citep{agnews_dbpedia_data} contain news articles and Wikipedia articles, respectively, categorized into 4 and 14 classes for classification evaluation. \textbf{SST-2} \citep{sst2_data} is designed for sentence-level sentiment classification and is included in the GLUE benchmark \citep{glue_evaluation}. For question-answering and classification tasks, we treat the context passage (and question, when applicable) as the source input and the answer or class label as the target output.

\begin{figure*}[ht]
  \begin{center}
    \centerline{\includegraphics[width=\textwidth]{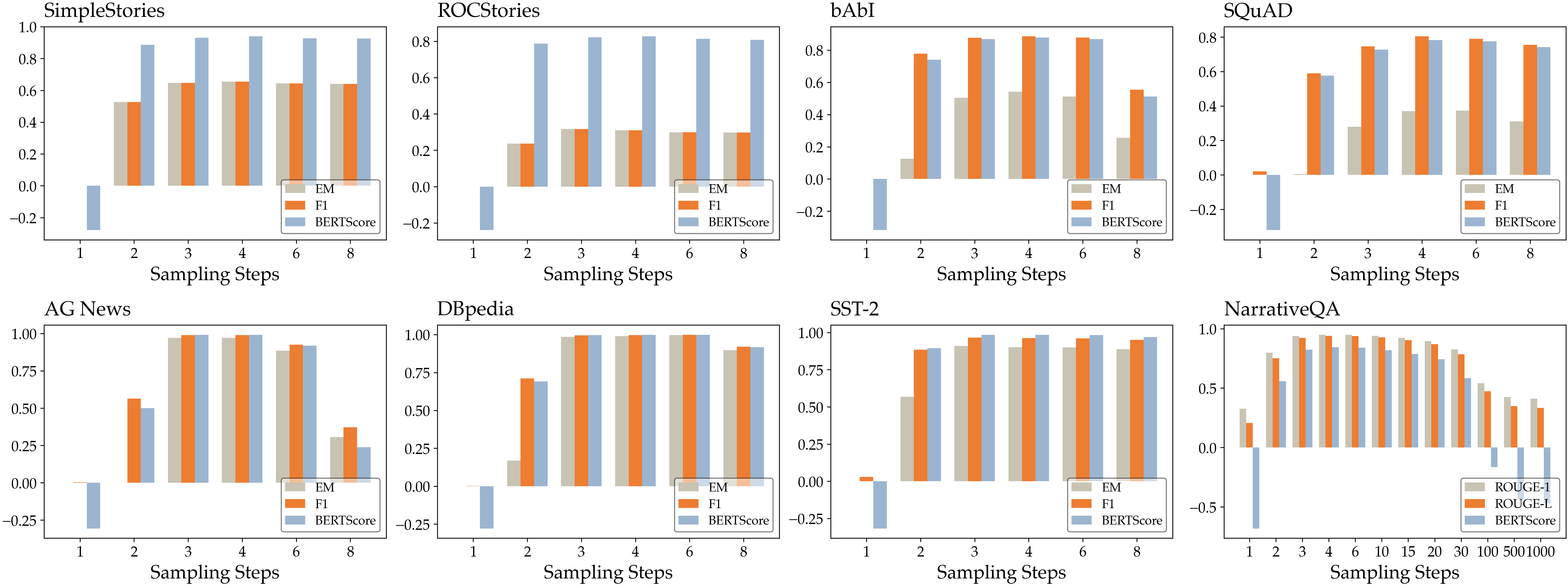}}
    \caption{Effect of sampling steps on generation quality of \yan\ (Transformer) across tasks.}
    \vskip -0.05in
    \label{fig:bar_line_single_combine_yan}
  \end{center}
\end{figure*}

\begin{figure*}[ht]
  \begin{center}
    \centerline{\includegraphics[width=\textwidth]{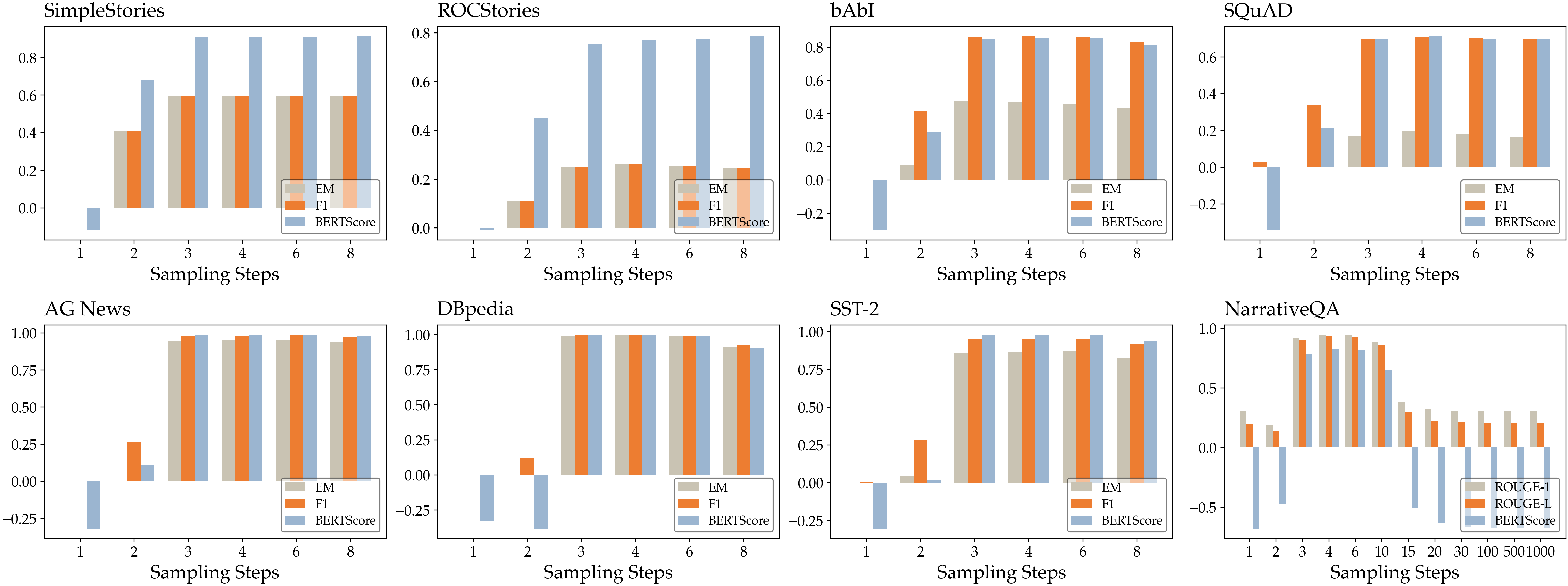}}
    \caption{Effect of sampling steps on generation quality of \yan\ (Mamba) across tasks.}
    \vskip -0.05in
    \label{fig:bar_line_single_combine_mamba}
  \end{center}
\end{figure*}

\begin{figure*}[ht]
  \begin{center}
    \centerline{\includegraphics[width=\textwidth]{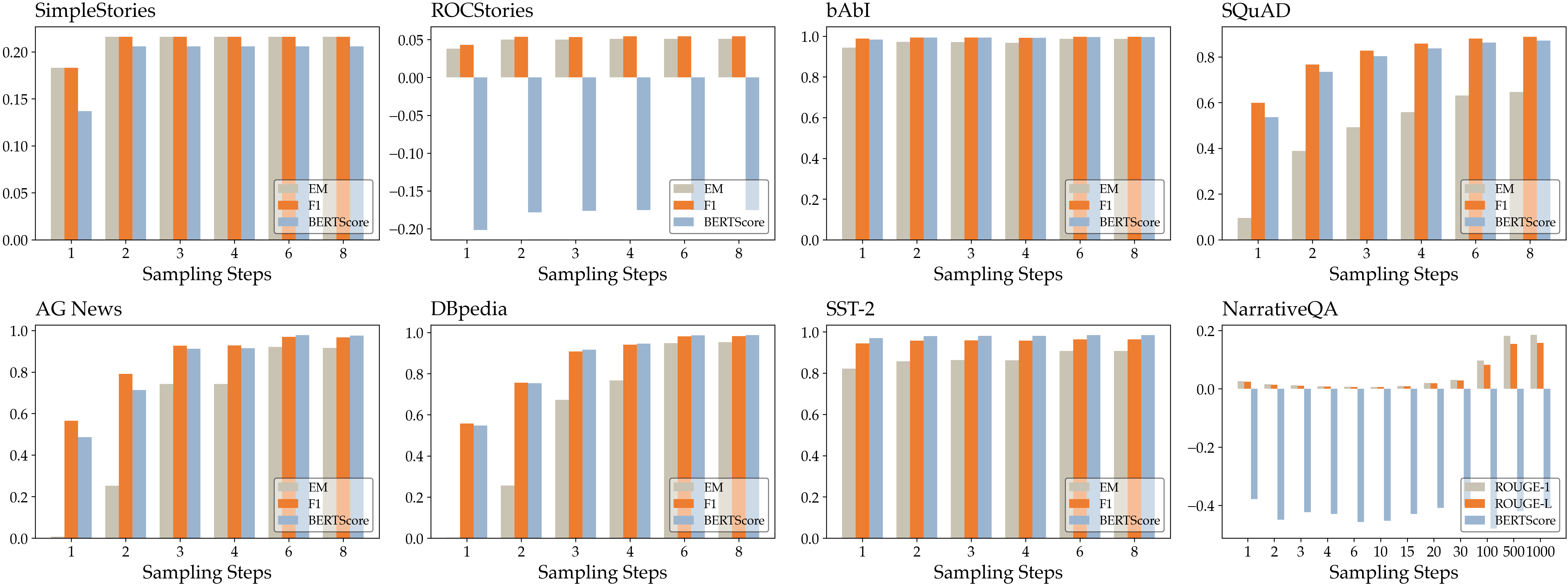}}
    \caption{Effect of sampling steps on generation quality of \llada\ across tasks.}
    \vskip -0.05in
    \label{fig:bar_line_single_combine_llada}
  \end{center}
\end{figure*}

\subsection{Model Optimization and Hyperparameters}\label{append:architecture_details}

\textbf{Transformer.} The encoder of \yan\ uses 6 transformer blocks \citep{transformer2017}. The vector field network uses 8 transformer blocks, with cross attention to the encoder at the 1st, 2nd, 3rd, 5th layers. The gating network is a 2-layer MLP. The decoder network consists of two linear layers with a skip connection (i.e., residual feed-forward network). During training, ODE integration is performed using four Euler steps. The maximum context length is 2048, and the hidden dimension of the model is 512. We use 6 experts, with $\sigma=0.1$.

\textbf{Mamba.} The Mamba-based \yan\ follows a similar architecture to its Transformer-based counterpart, except that within each Transformer block, the self-attention mechanism is replaced by a bidirectional Mamba token mixer. Specifically, two independently parameterized Mamba modules are applied to the input sequence in forward and reverse temporal order, and their outputs are concatenated and linearly projected back to the model dimension. The encoder uses 4 such blocks, and the vector field network uses 6 such blocks.

\textbf{Tokenizer.} \yan\ uses the LLaMA~3 tokenizer \citep{llama2024} with a vocabulary size of 128256. Since \yan\ handles variable-length inputs via right padding, we additionally include a special token \texttt{<|padding|>} with \texttt{tokenid=128256}. To support the infilling task, we further add a special token \texttt{<|mask|>} with \texttt{tokenid=128257}. The resulting vocabulary size is 128258. For all other baseline models evaluated on downstream tasks, we use their original tokenizers.

\textbf{Optimization.} We use the AdamW optimizer \citep{adamw2019} with $\beta_1=0.9, \beta_2=0.999$, and a weight decay of $0.01$. The learning rate is set to $1e{-4}$ for pre-training and $1e{-5}$ for fine-tuning. We apply a linear warmup schedule for the first 1000 steps starting from 0, after which the learning rate remains constant at the target value. Training is performed on a single node using 8 NVIDIA H200 GPUs.

\subsection{Fine-Tuning}\label{append:finetune}

BART is a sequence-to-sequence model that treats the source and target texts in the same way as \yan. For \llada\ and GPT-2, the source is treated as a prefix concatenated to the target. Across all models, we use oracle sequence lengths for both training and inference, excluding the impact of different length-handling strategies.

\section{Additional Results}

\subsection{Sensitivity to Sampling Steps}\label{append:sensitivity_to_T}

As shown in Fig.~\ref{fig:bar_line_single_combine_yan} and Fig.~\ref{fig:bar_line_single_combine_mamba}, a sampling step of $T=3$ is sufficient for \yan\ to achieve acceptable generation quality with both Transformer and Mamba architectures. Increasing $T$ beyond $6$ generally degrades generation quality, suggesting that excessive sampling has adverse effects. This behavior contrasts sharply with the diffusion language model \llada\ \citep{llada2025}, for which increasing the number of sampling steps typically improves generation quality, as illustrated in Fig.~\ref{fig:bar_line_single_combine_llada}.

\subsection{Inference Time of \llada}\label{append:time_of_llada}

\begin{figure*}[ht]
  \begin{center}
    \centerline{\includegraphics[width=0.55\textwidth]{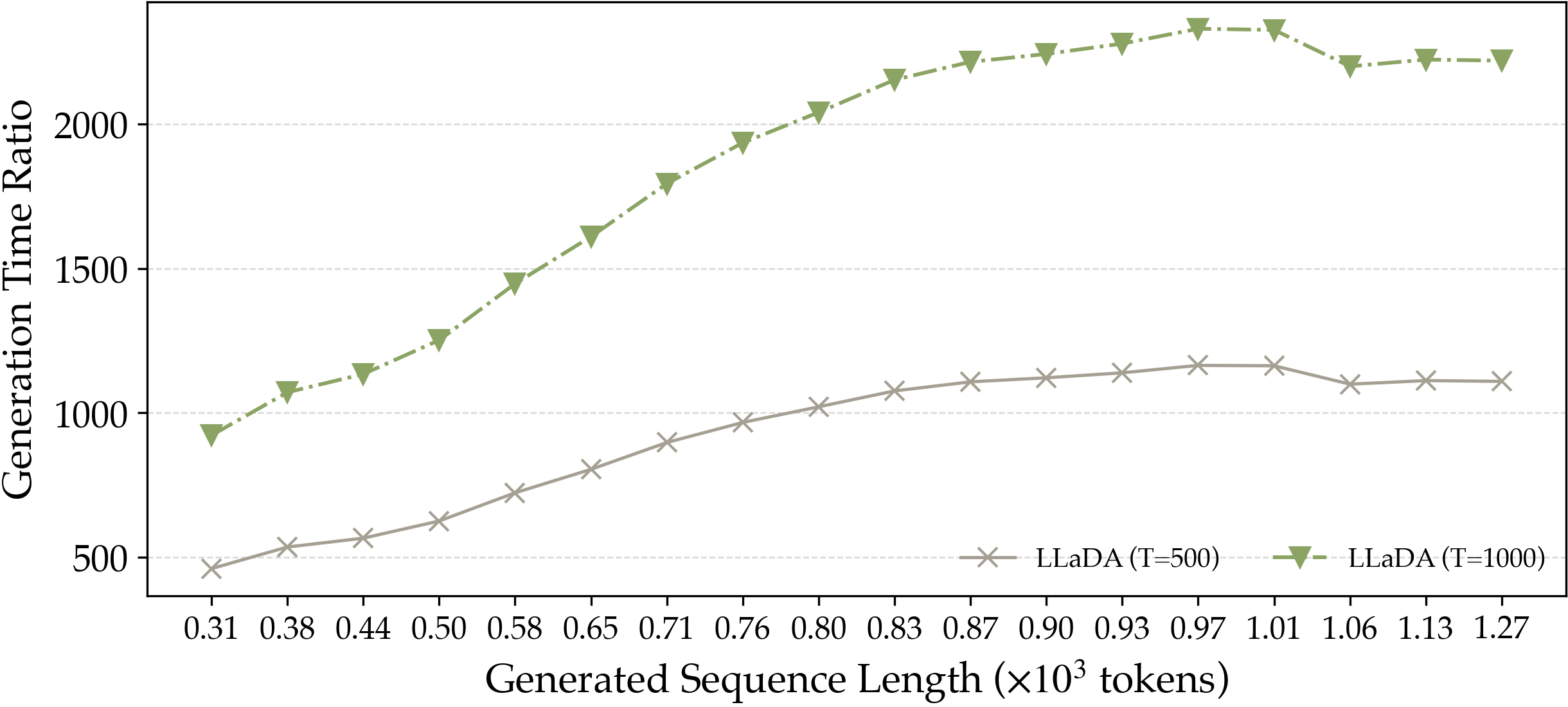}}
    \caption{Inference time across generated sequence lengths, reported as a ratio to \yan\ (Transformer) with $T=3$.}
    \vskip -0.05in
    \label{fig:times_versus_lengths_ratio_T1000}
  \end{center}
\end{figure*}

Fig.~\ref{fig:times_versus_lengths_ratio_T1000} shows the inference time curves of \llada\ with sampling steps $T=500$ and $T=1000$, complementing Fig.~\ref{fig:efficiency_analysis}. It demonstrates the inference latency on the order of $10^3\times$ higher than that of \yan.

\subsection{Generation Examples}\label{append:showcase}

Fig.~\ref{fig:display_last} and Fig.~\ref{fig:display_infill} present selected examples generated by \yan\ for three downstream tasks: question answering, last-word completion, and text infilling. Each example is generated using an Euler ODE solver with four steps.

\begin{figure*}[hp]
  \vskip -0.1in
  \begin{center}
    \centerline{\includegraphics[width=\textwidth]{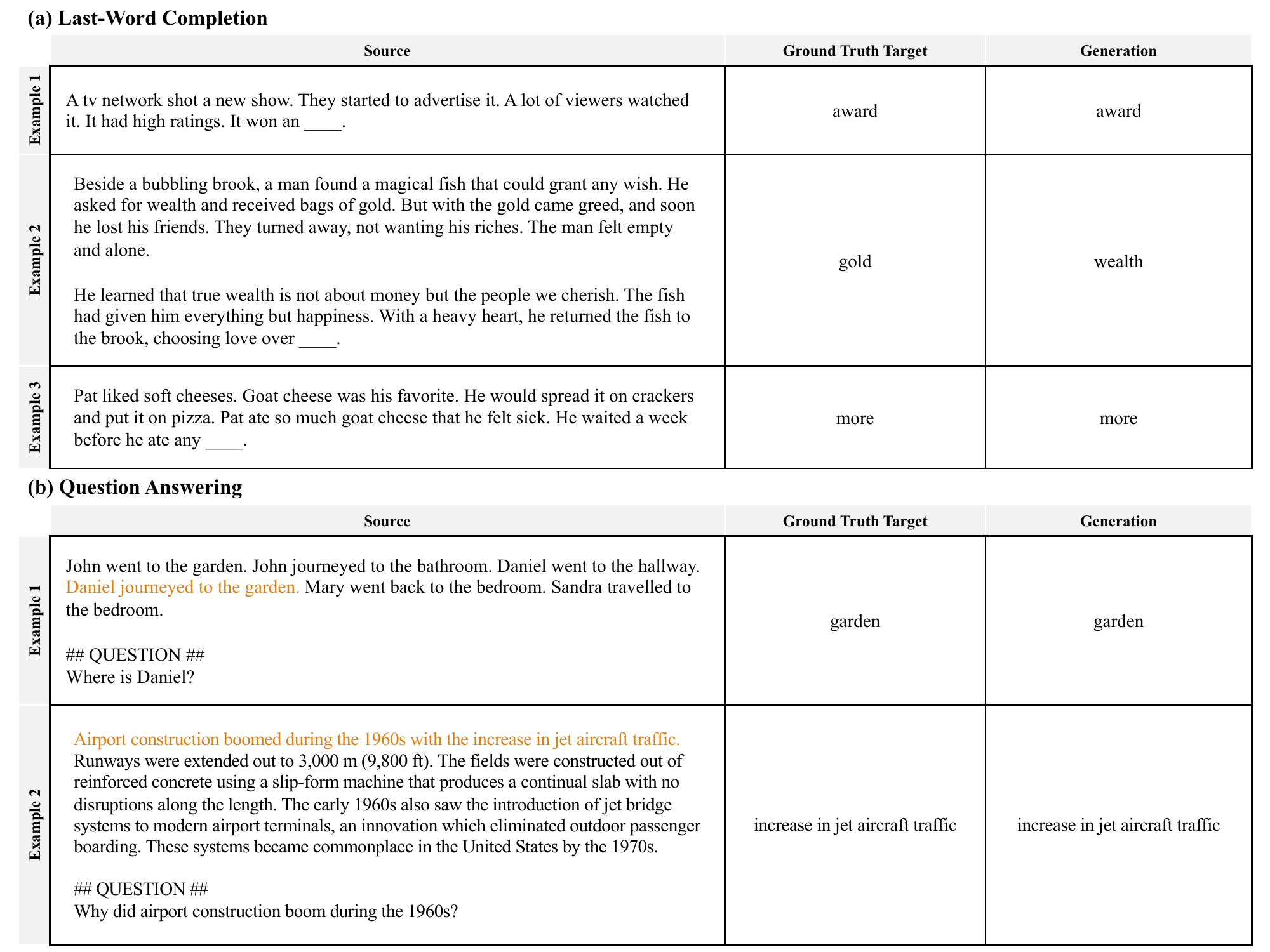}}
    \vskip -0.1in
    \caption{Last-word completion and question answering examples of \yan}
    \label{fig:display_last}
  \end{center}
\end{figure*}

\begin{figure*}[hp]
  \begin{center}
    \centerline{\includegraphics[width=\textwidth]{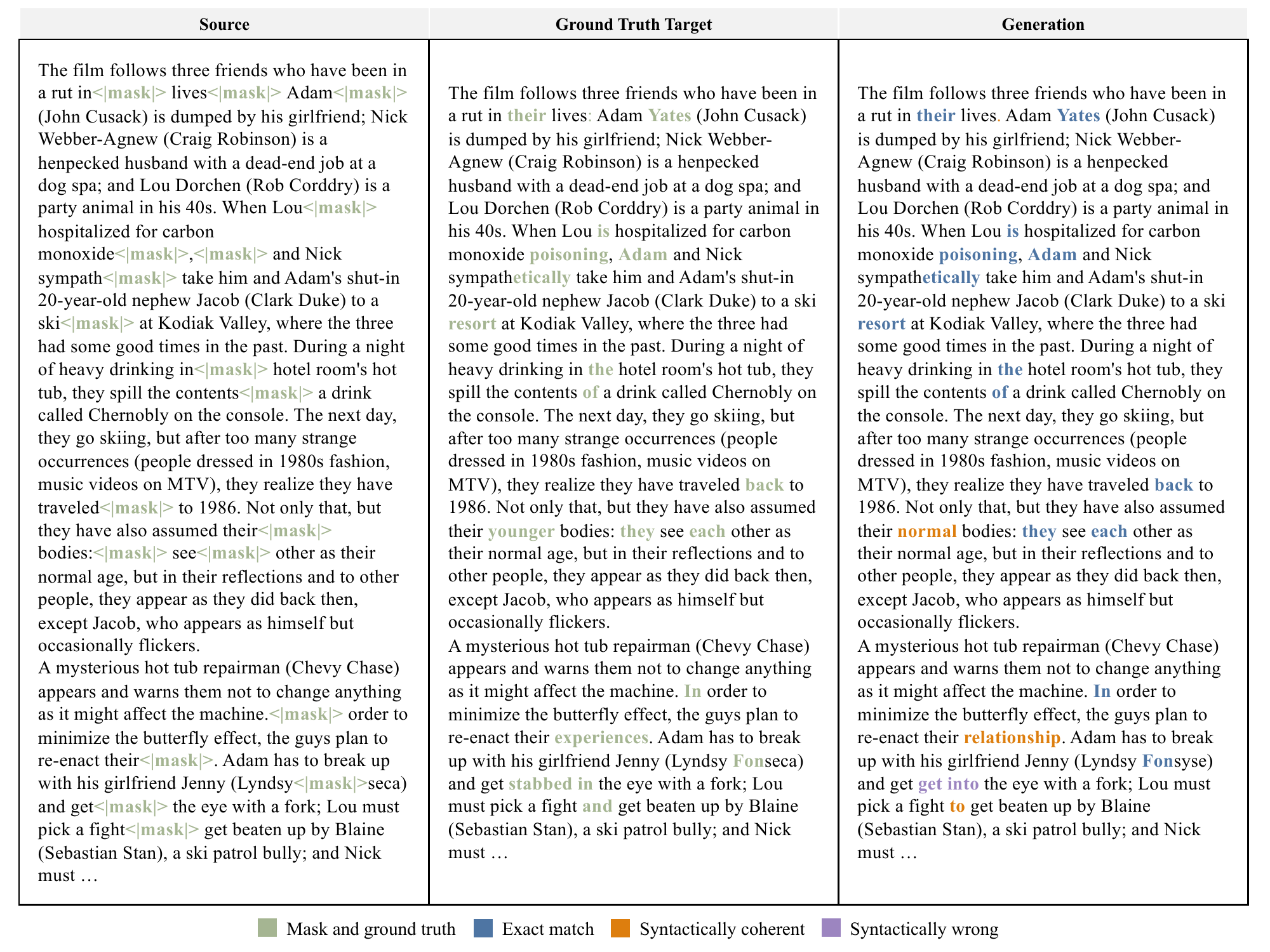}}
    \vskip -0.05in
    \caption{A text infilling example of \yan.}
    \label{fig:display_infill}
  \end{center}
\end{figure*}



\end{document}

%% file: table.tex
\begin{table*}[t]
\centering
\caption{Quality and efficiency results. For metrics with multiple values, results are reported following the order specified in the \emph{Metric} column. $T^*$ denotes the optimal sampling step that yields the best performance on each dataset, and the reported metric values correspond to $T^*$. R-1, R-L, and BS-F1 denote ROUGE-1, ROUGE-L, and BERTScore-F1, respectively. \textbf{Bold} and \underline{underlined} values indicate the best and second-best results, respectively. \yan-M and \yan-TRF denote \yan\ with Mamba and Transformer architectures, respectively.}
\label{tab:main_results}
\footnotesize
\begin{tabularx}{\textwidth}{>{\centering\arraybackslash}X >{\centering\arraybackslash}X >{\centering\arraybackslash}X >{\centering\arraybackslash}X >{\centering\arraybackslash}X >{\centering\arraybackslash}X >{\centering\arraybackslash}X}
\toprule
\multicolumn{2}{@{}c@{}}{\textbf{Model Info}} & \textbf{GPT-2} & \textbf{BART} & \textbf{LLaDA} & \textbf{YAN-M} & \textbf{YAN-TRF} \\
\midrule
\multicolumn{2}{@{}c@{}}{Size} & 124M & 139M & 8B & 210M & 200M \\
\multicolumn{2}{@{}c@{}}{$T^*$} & - & - & 1k/2/4/6/8/6/8/6 & 4/4/4/4/4/6/4/4 & 4/4/4/3/4/3/4/4 \\
\specialrule{0.06em}{3pt}{3pt}
\textbf{Dataset} & \textbf{Metric} &  &  &  &  &  \\
\midrule
NarrativeQA & R-1/R-L/TPS & 68.9/61.8/206 & 81.8/80.6/211 & 18.5/15.8/14 & \underline{94.6}/\underline{93.6}/\underline{18.1k} & \textbf{94.9}/\textbf{93.9}/\textbf{20.6k} \\
SimpleStories & EM/BS-F1 & 42.8/85.1 & 46.5/90.3 & 21.6/20.6 & \underline{59.7}/\underline{91.1} & \textbf{65.5}/\textbf{93.9} \\
ROCStories & EM/BS-F1 & \underline{28.5}/\underline{79.7} & 21.3/70.6 & 5.1/-17.5 & 26.1/77.0 & \textbf{31.0}/\textbf{82.7} \\
AG News & Accuracy & 93.8 & 91.2 & 92.1 & \underline{95.1} & \textbf{97.2} \\
DBpedia & Accuracy & 98.9 & 94.7 & 95.3 & \textbf{99.5} & \underline{99.1} \\
SST-2 & Accuracy & 90.1 & 88.0 & \underline{90.7} & 87.4 & \textbf{91.0} \\
SQuAD & F1/BS-F1 & 48.0/41.0 & 78.9/76.7 & \textbf{88.8}/\textbf{87.2} & 70.8/71.3 & \underline{80.4}/\underline{78.2} \\
bAbI & F1/BS-F1 & 47.7/15.7 & 78.3/74.8 & \textbf{99.7}/\textbf{99.6} & 86.4/85.3 & \underline{88.5}/\underline{87.8} \\
\bottomrule
\end{tabularx}
\end{table*}